\documentclass{article}
\usepackage{todonotes}

\usepackage[preprint]{corl_2026}
\usepackage{amsmath}
\usepackage{amsthm}
\usepackage{amssymb}

\usepackage{aliascnt}

\theoremstyle{plain}
\newtheorem{theorem}{Theorem}

\newaliascnt{proposition}{theorem}
\newtheorem{proposition}[proposition]{Proposition}
\aliascntresetthe{proposition}

\newaliascnt{lemma}{theorem}
\newtheorem{lemma}[lemma]{Lemma}
\aliascntresetthe{lemma}

\newaliascnt{corollary}{theorem}
\newtheorem{corollary}[corollary]{Corollary}
\aliascntresetthe{corollary}

\theoremstyle{definition}
\newaliascnt{definition}{theorem}

\aliascntresetthe{definition}

\theoremstyle{remark}
\newaliascnt{remark}{theorem}

\aliascntresetthe{remark}

\usepackage[ruled,vlined]{algorithm2e}
\usepackage{wrapfig}
\usepackage{xspace}
\newcommand{\method}{\textsc{strips-wm}\xspace}
\usepackage{booktabs}

\usepackage{xcolor}
\usepackage{capt-of}

\usepackage[final]{microtype}
\microtypesetup{
  protrusion=true,
  expansion=true,
  tracking=true,
  kerning=true,
  spacing=true
}

\usepackage[compact]{titlesec}

\titlespacing*{\section}
{0pt}{0.4ex plus 0.1ex minus 0.1ex}{0.25ex plus 0.1ex}

\titlespacing*{\subsection}
{0pt}{0.35ex plus 0.1ex minus 0.1ex}{0.2ex plus 0.1ex}

\titlespacing*{\paragraph}
{0pt}{0.3ex plus 0.1ex minus 0.1ex}{0.5em}

\titleformat{\paragraph}[runin]
{\normalfont\normalsize\bfseries}
{}
{0pt}
{}
[]

\newcommand{\strips}{\textsc{strips}\xspace}%

 \usepackage{graphicx}
\author{
  Abhiroop Ajith\\
  Department of Robotics Engineering\\
  Worcester Polytechnic Institute\\
  United States\\
  \texttt{aajith@wpi.edu}
  \And
  Constantinos Chamzas\\
  Department of Robotics Engineering\\
  Worcester Polytechnic Institute\\
  United States\\
  \texttt{cchamzas@wpi.edu}
}

\begin{document}

\title{STRIPS-WM: Learning Grounded Propositional STRIPS-style World Models from Images}
\maketitle


\begin{abstract}
Robots performing long-horizon visual manipulation observe high-dimensional images, but
successful plans depend on action-relevant facts: what can be done now and what changes
afterward. Classical task planners exploit this structure through symbolic operators with
preconditions and effects, but obtaining such representations from raw visual experience
remains challenging. We study a visual task-planning setting in which a robot receives only
action-image transitions: the current image, executed high-level action, and resulting image.
At test time, given start and goal images, the robot must produce a sequence of high-level
actions that reaches the goal. We introduce \method, a framework for learning image-grounded
\strips-style world models directly from visual transitions. \method first induces a finite
abstract transition graph from images, then learns binary predicates and propositional
operators that explain this graph. The learned predicates are grounded back into perception through a visual classifier, enabling classical
planning directly from novel start and goal images. Experiments on visual rearrangement
tasks show that \method improves image-to-plan success over the tested visual rollout, latent graph-search, and latent-symbolic baselines.
\end{abstract}
\keywords{Learning Logic-Based Representations, Visual Task Planning}

\begin{center}

\vspace{-0.4em}
\begin{minipage}{0.98\linewidth}
    \centering
    \includegraphics[
        width=0.92\linewidth,
        height=0.24\textheight,
        keepaspectratio
    ]{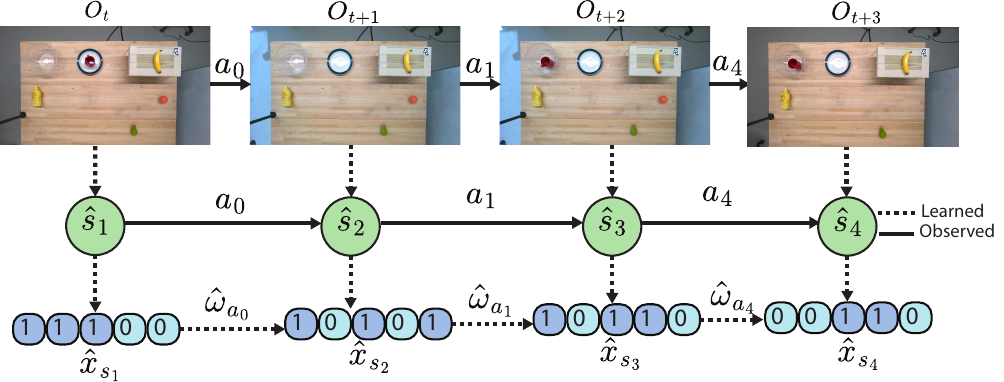}
\vspace{-0.4em}
    \captionof{figure}{\footnotesize
    Three planning spaces for the same visual task.
    \textbf{Top:} In image space, the robot observes RGB states connected by high-level
    action identifiers, \(o_t \xrightarrow{a_t} o_{t+1}\); planning here would require
    reasoning over future images.
    \textbf{Middle:} In the learned task-graph space, observations are compressed into
    abstract states \(\hat{s}\), yielding action-indexed transitions
    \(\hat{s}_t \xrightarrow{a_t} \hat{s}_{t+1}\). This enables graph search, but requires
    a flat enumeration of abstract states and edges.
    \textbf{Bottom:} In the learned \strips space, abstract states receive predicate vectors
    \(\hat{\mathbf{x}}_{\hat{s}}\), and each action identifier \(a\) receives a grounded
    operator \(\hat{\omega}_a\) with preconditions and effects. This task representation enables
    classical task planning over predicate states rather than images or enumerated graph edges.
    }
    \label{fig:intro}
\end{minipage}
\end{center}
\vspace{-0.4em}
\section{Introduction}
Robots perceive their surroundings through high-dimensional
observations such as RGB images. Yet, long-horizon manipulation tasks often admit much
simpler abstract descriptions~\cite{ghallab2004automated}. In the table-setting example
of~\autoref{fig:intro}, the robot must reach a
target arrangement by executing a sequence of high-level actions. Planning directly in
image space would require predicting long sequences of future images~\cite{levine2016end}. However, the pixel level is too
fine-grained for task planning: most visual variation is irrelevant to whether an action
can be applied or what it will change. What matters is the smaller set of action-relevant
conditions that hold in the scene and how actions transform them~\cite{konidaris_skills_2018}.

Classical task planning formalizes this idea through symbolic states and transition models.
In \strips planning, states are represented by symbolic facts (predicates), and actions are represented
by operators with preconditions and add/delete effects~\cite{fikes1971strips}.
Given such a model, the remaining search problem is exactly the kind of discrete planning
problem that off-the-shelf task planners can solve, rather than 
predicting future images. This structure is illustrated at the bottom of~\autoref{fig:intro}:
instead of raw observations, \strips represents the world using symbolic states \(s\) with
predicate vectors \(\mathbf{x}_s\); instead of treating actions only as identifiers \(a\),
it represents them as operators \(\omega_a\) with preconditions and effects. The central
challenge is therefore to learn this symbolic world model from data.

We propose \method to address this challenge directly. Given only
action-indexed visual transitions $ \mathcal{D_O}=\{(o_i,a_i,o'_i)\}_{i=1}^{N},$
we learn an image-grounded \strips world model. As illustrated from the top to the bottom
of~\autoref{fig:intro}, this means learning a mapping from observations to predicate states
\(\hat{\mathbf{x}}_{\hat{s}}\), together with one operator \(\hat{\omega}_a\) for each action
identifier.

\method learns this model in three stages, mirroring the three levels shown
in~\autoref{fig:intro}. First, an action-conditioned visual dynamics model induces an image-grounded task graph, converting visual transitions
\(o_t \xrightarrow{a_t} o_{t+1}\) into abstract transitions
\(\hat{s}_t \xrightarrow{a_t} \hat{s}_{t+1}\). Second, a CP-SAT solver assigns
binary predicate vectors \(\hat{\mathbf{x}}_{\hat{s}}\in\{0,1\}^m\) to graph nodes and
learns one grounded operator \(\hat{\omega}_a\) per action identifier, so that graph edges
are explained by \strips-style preconditions and add/delete effects. Third, a visual
predicate classifier grounds these learned predicate assignments back to images, enabling
classical planning directly from novel start and goal observations.

Our contributions are: (i) a framework for learning image-grounded propositional
\strips world models from action-indexed visual transitions, without object-centric
supervision or hand-designed predicates; (ii) a CP-SAT formulation that jointly learns
binary predicate assignments for abstract states and grounded \strips operators explaining
the observed transition graph; and (iii) an image-to-plan pipeline that grounds the learned
predicates in perception and enables classical planning from novel start and goal images.
Experiments on simulated and real rearrangement tasks show improved long-horizon
planning over visual rollout, latent graph-search, and latent-symbolic baselines.

\section{Related Work} 
\paragraph{Reconstruction-based visual planning.} Such methods plan from images by learning forward predictive models and searching over candidate action sequences, either by predicting future images~\cite{levine2016end,yen2020experience,ichter2019robot,paxton_visual_2019,driess_deep_2020,lippi_enabling_2023}, or by planning through imagined latent rollouts from pixels
~\cite{hafner_learning_2019,hafner2022deep,wu2022daydreamer}. These methods optimize
representations for visual prediction, often requiring reconstruction of
pixel-level observations, which can preserve details irrelevant to action applicability and
effects. \method does not reconstruct or predict images.

\paragraph{Visual planning without decoding.}
These methods avoid pixel reconstruction by learning continuous latent
representations with contrastive, predictive, or goal-conditioned objectives
~\cite{2025lecundinowm,chamzas2022-contrastive-visual-task-planning,srinivas2018universalplanningnetworks,2023lecunJEPA,kipf2020contrastivelearningstructuredworld,laskin2020curl}.
While these methods discard some irrelevant visual detail, their latent spaces lack the
logical structure of symbolic representations. Planning depends on neural forward prediction in the learned latent space, without the guarantees provided by classical planners.
\method learns predicate states and grounded operators, yielding a symbolic model that can be searched by classical planners. 

\paragraph{Learning symbolic abstractions.}
A complementary line of work learns symbolic abstractions for planning. This foundational line of work grounds symbols in
the preconditions and effects of high-level skills~\cite{konidaris_constructing_2014,konidaris_skills_2018}, while recent neuro-symbolic methods
learn predicates, operators, or relational transition models for planning
~\cite{silver_learning_2021,silver_learning_2023,chitnis_learning_2022,shah2024reals}.
These methods show the value of symbolic structure, but typically assume object-centric,
relational, or continuous state inputs rather than raw images. Closest to our setting,
LatPlan learns propositional \strips from unlabeled image pairs~\cite{asailatplan}, and we include it as our main baseline.   

\section{Problem Formulation}
\label{sec:problem}
\paragraph{Observation and Action Data.}
We assume the robot observes the world through RGB images and executes actions from a
fixed finite library of high-level primitives. Let $\mathcal{O}$ denote the space of RGB
observations, and let
\mbox{ $
\mathcal{A}=\{1\ldots,L\}
$}
denote the finite set of executable action identifiers. The action identifiers are opaque:
each identifier denotes one reusable primitive, but provides no semantic information about
the primitive's meaning, arguments, applicability, or effects. Additionally, the robot is
given a dataset of visual transitions:
\[
D_{\mathcal O}
=
\{(o_i,a_i,o'_i)\}_{i=1}^{N},
\qquad
o_i,o'_i\in\mathcal{O},\;\; a_i\in\mathcal{A}.
\]
Each triplet records the observation before an action, the identifier of the executed action,
and the observation after execution. We assume no access to object identities, segmentation
masks, object poses, hand-designed predicates, symbolic state labels, or symbolic goals
during training.

\paragraph{Action-Relevant Abstraction Assumption.}
Following the skills-to-symbols perspective of Konidaris et al.~\cite{konidaris_skills_2018},
we assume that the available high-level primitives induce a finite task-level abstraction of
the underlying world. Intuitively, two physical world states belong to the same abstract
state if they are interchangeable for planning: the same high-level actions are applicable,
executing those actions leads to the same abstract outcomes (deterministic world), and the same task goals are satisfied. Low-level variations within an abstract state may change the RGB image, but they do not affect the feasibility or outcome of high-level action sequences. This assumption is what makes discrete high-level planning well defined from visual observations.
We also assume that the dataset $\mathcal{D}_\mathcal{O}$ sufficiently covers the relevant part of the task space.

\paragraph{Test-Time Planning Objective.}
At test time, the robot is given a start image $o_{\mathrm{start}}$ and a goal image
$o_{\mathrm{goal}}$. The objective is to produce a sequence:
\[
\pi = (a_1,\ldots,a_H),
\qquad a_h\in\mathcal{A},
\]
such that executing $\pi$ from the world state depicted by $o_{\mathrm{start}}$ reaches a
world state that is task-equivalent to the one depicted by $o_{\mathrm{goal}}$.

\section{Background: STRIPS Propositional Planning} \label{sec:strips_background}

We review \strips planning since our method heavily builds on its concepts.
\strips is a classical formalism for representing deterministic planning problems using
symbolic states and action models~\citep{fikes1971strips}. In the propositional setting, a vocabulary of $m$ Boolean predicates $\mathcal{P}=\{p_1,\ldots,p_m\}$ is given.

A symbolic state $\sigma$ is represented by a binary vector
$\mathbf{x_{\sigma}}\in\{0,1\}^m$, where $\mathbf{x}_{\sigma,i}=1$ means that 
$p_i$ is true in state $\sigma$, and $\mathbf{x}_{\sigma,i}=0$ means that it is false.
Each action $a$ is represented by an operator:
\[
\omega_a
=
\left\langle
\mathbf{pre}^{+}_a,
\mathbf{pre}^{-}_a,
\mathbf{add}_a,
\mathbf{del}_a
\right\rangle ,
\]
where
$\mathbf{pre}^{+}_a,\mathbf{pre}^{-}_a,\mathbf{add}_a,\mathbf{del}_a\in\{0,1\}^m$
are binary masks over the predicate vocabulary. The masks
$\mathbf{pre}^{+}_a$ and $\mathbf{pre}^{-}_a$ specify which predicates must be true
or false for action $a$ to be applicable. In state $\sigma$, action $a$ is applicable if
\[
\mathbf{pre}^{+}_a \leq \mathbf{x}_\sigma,
\qquad
\mathbf{pre}^{-}_a \leq \mathbf{1}-\mathbf{x}_\sigma,
\]
where inequalities are interpreted elementwise. If the action is applicable, it produces a successor state $\mathbf{x_{\sigma_1}}$ by setting predicates in
$\mathbf{add}_a$ to true, setting predicates in $\mathbf{del}_a$ to false, and leaving
all other predicates unchanged. Equivalently, each operator $\omega_a$ induces a
partial transition function $\tau_{\omega_a}$, defined on applicable states as:
\[
\mathbf{x}_{\sigma_1} = \tau_{\omega_a}(\mathbf{x}_{\sigma_0})
=
\mathbf{add}_a
\;\lor\;
\bigl(\mathbf{x}_{\sigma_0} \land \neg \mathbf{del}_a\bigr).
\]
The add and delete masks are disjoint, and the effects are state-independent:
whenever the preconditions of $a$ hold, the same add/delete masks are applied.
This yields a compact transition model for search over symbolic states.

A \strips planning problem consists of an initial symbolic state, a goal
condition, and a set of operators. In our setting, neither the predicate
vocabulary $\mathcal{P}$ nor the operator masks are given; both must be learned
from visual transition data.

\section{Methodology}
\label{sec:methodology}

\subsection{Image-Grounded Task Graph}

\label{sec:world-model}

The goal of this stage is to use the visual transition dataset
$D_{\mathcal O}=\{(o_i,a_i,o'_i)\}_{i=1}^{N}$ to learn an image-grounded task
graph $\hat{G}=(\hat{\mathcal S},\hat{E})$. We do this by learning a discrete
state assignment $\alpha:\mathcal O\rightarrow\hat{\mathcal S}$ that maps each
observation to a learned symbolic-state identity $\hat{s}=\alpha(o)$. Ideally,
observations that are interchangeable for high-level planning receive the same
state identity, even if they differ in irrelevant visual details.

\autoref{fig:worldmodel} shows the architecture used to learn this
assignment. A student encoder maps the current observation $o_t$ to a latent
representation, which is discretized with finite scalar quantization to produce
a code $z_t$. A slowly updated teacher encoder maps the successor observation
$o_{t+1}$ to a target code $z^{\mathrm{tar}}_{t+1}$. Given $z_t$ and the opaque
action identifier $a_t$, an action-conditioned predictor is trained to predict
the successor code. This gives a JEPA-style~\cite{2023lecunJEPA} objective:
prediction happens in latent code space rather than through pixel
reconstruction.

To encourage the representation to retain action-relevant information, we
also train an inverse dynamics head that predicts the executed action
identifier from the current and successor latent codes. This auxiliary
loss discourages collapsed codes that explain the visual data but fail to
distinguish transitions produced by different actions.

After training, each observation is assigned an abstract-state identifier $\hat{s}$ from its quantized code: $ \hat{s} = \alpha(o)$.
We apply this map to every visual transition in $D_{\mathcal O}$. For each transition
$(o_i,a_i,o'_i)$, we obtain the abstract transition
$ \hat{t}_i = \bigl(\alpha(o_i),a_i,\alpha(o'_i)\bigr). $
Collecting these transformed transitions yields the learned action task graph. 
\[
\hat{G}=(\hat{\mathcal{S}},\hat{E}),
\qquad
\hat{E}
=
\left\{
\bigl(\hat{s_i},a_i,\hat{s_i}'\bigr)
\right\}_{i=1}^{N}.
\]
The nodes of this graph are grounded in images, and the edges record
observed effects of the opaque action identifiers. However, the graph is
not yet a symbolic planning model: it has no predicates, preconditions,
or reusable add/delete effects. The next stage learns those symbolic structures.

\begin{figure}
    \centering
    \includegraphics[width=\linewidth]{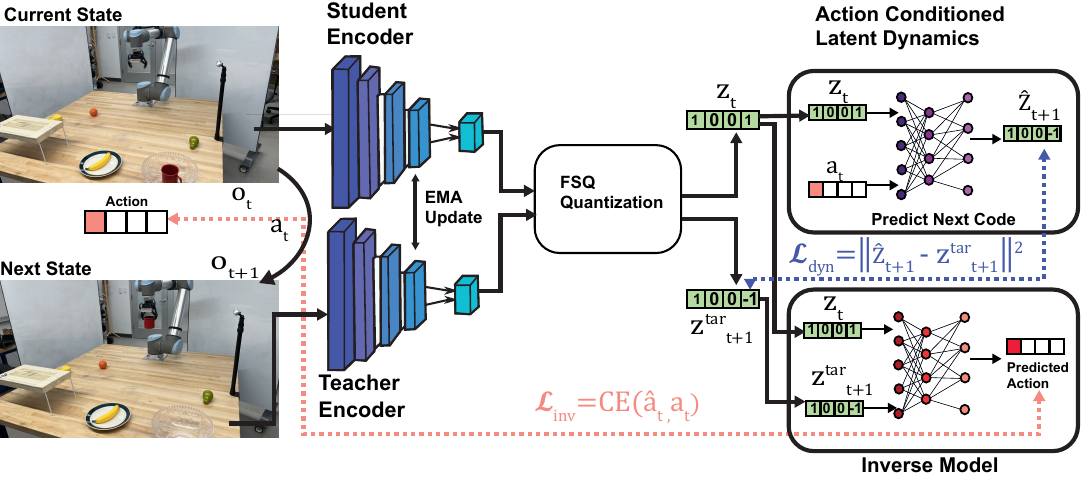}
    \vspace{-2em}

    \caption{\footnotesize\textbf{Learning the image-grounded task graph:} A student encoder maps the current observation to a quantized FSQ code, while an EMA teacher encoder provides the target successor code. An action-conditioned dynamics model predicts the next code from the current code and action identifier, and an inverse model predicts the action from current and successor codes. Unique quantized codes define visual task-graph nodes, and observed transitions define labeled graph edges. }
    \label{fig:worldmodel}
    \vspace{-1em}
\end{figure}

\subsection{Learning STRIPS Operators and Predicates from the Task Graph}
\label{sec:operator-learning}

The learned task graph $\hat{G}$ from~\autoref{sec:world-model} gives us a
discrete model of the visual experience, but it is still not a symbolic planning
model. Its nodes are opaque abstract states, and its edges are labeled only by
opaque action identifiers. To perform \strips-style planning, we must
obtain a binary predicate vector $\hat{\mathbf{x}}_{\hat{s}} \in \{0,1\}^m$ for each graph node $\hat{s} \in \hat{\mathcal S}$, together with one grounded propositional operator 
$\hat{\omega}_a$ for each action identifier $a \in \mathcal{A}$. 

We cast this as a combinatorial search over predicate assignments and operator
masks, and encode the search as a binary CP-SAT problem \cite{cpsatlp}. Similar in spirit to Bonet and Geffner, who learn symbolic planning
representations from state-space graphs using a SAT-based formulation
~\cite{bonet2020learningfirstordersymbolicrepresentations}, we use graph
structure as supervision. Our setting differs in that the graph nodes are
induced from raw images, and the learned operators are grounded propositional
operators rather than lifted first-order schemas.

We formulate the CP-SAT problem as: 
\[
\begin{aligned}
\text{find} \quad
    & \hat{X} = \{\hat{\mathbf{x}}_{\hat{s}}\}_{\hat{s}\in \hat{\mathcal S}},
      \qquad
      \hat{\Omega} = \{\hat{\omega}_a\}_{a\in\mathcal{A}} \\
\text{such that} \quad
    & \hat{\omega}_a \text{ is applicable in } \hat{\mathbf{x}}_{\hat{s}_0},
      \qquad \forall (\hat{s}_0,a,\hat{s}_1)\in\hat{E},
      \qquad \mathrm{(C1)} \\
    & \tau_{\hat{\omega}_a}(\hat{\mathbf{x}}_{\hat{s}_0})
      =
      \hat{\mathbf{x}}_{\hat{s}_1},
      \qquad \forall (\hat{s}_0,a,\hat{s}_1)\in\hat{E},
      \qquad \mathrm{(C2)} \\
    & \hat{\omega}_a \text{ is not applicable in } \hat{\mathbf{x}}_{\hat{s}},
      \qquad \forall (\hat{s},a)\in\hat{\mathcal{N}},
      \qquad \mathrm{(C3)} .
\end{aligned}
\]
Here, $\hat{X}$ assigns a learned symbolic predicate vector to every learned
abstract state, $\hat{\Omega}$ assigns one learned grounded
\strips-style operator to every action identifier, and
$\tau_{\hat{\omega}_a}$ denotes the \strips transition induced by operator
$\hat{\omega}_a$. The set $\hat{\mathcal N}$ contains the trusted missing
state-action pairs used for negative evidence.

Let $\hat{\mathcal S}_{\mathrm{trust}}\subseteq\hat{\mathcal S}$ denote the
subset of graph states on which we apply the closed-world negative-evidence
assumption. These are graph states on which we make a local closed-world assumption over outgoing action labels.  Missing action labels from states outside $\hat{\mathcal S}_{\mathrm{trust}}$ are ignored. We
define the trusted missing state-action pairs as
\[
\hat{\mathcal N}
=
\left\{
(\hat{s},a)
\;\middle|\;
\hat{s}\in\hat{\mathcal S}_{\mathrm{trust}},
\;
\nexists \hat{s}'\in\hat{\mathcal S}
\text{ such that }(\hat{s},a,\hat{s}')\in\hat E
\right\}.
\]
Constraints $\mathrm{(C1)}$ and $\mathrm{(C2)}$ say that
\textbf{observed edges provide positive evidence}. If
$(\hat{s}_0,a,\hat{s}_1)\in\hat{E}$, then action $a$ must be applicable in
$\hat{\mathbf{x}}_{\hat{s}_0}$ , and its effects must map the predicate vector of
$\hat{s}_0$ to the predicate vector of $\hat{s}_1$.

Constraint $\mathrm{(C3)}$ says that trusted missing state-action pairs provide
negative evidence: if $(\hat{s},a)\in\hat{\mathcal N}$, then action $a$ was not observed from trusted state $\hat{s}$ and is 
not applicable in $\hat{\mathbf{x}}_{\hat{s}}$, unless applicability slack is
used.

\paragraph{Observed transitions provide positive evidence.}
First we encode the positive requirements $\mathrm{(C1)}, 
\mathrm{(C2)}$ as binary constraints. Each observed edge
$(\hat{s}_0,a,\hat{s}_1)\in\hat{E}$ is treated as a successful execution of
action $a$. Thus $\mathrm{(C1)}$ requires the learned preconditions $\mathbf{pre^{+}}_a, \mathbf{pre^{-}}_a$ to hold in  $\mathbf{\hat{x}_{\hat{s}_0}}$:
\[
\hat{x}_{\hat{s}_0,p}
\ge
\mathit{pre}^{+}_{a,p},
\qquad
\hat{x}_{\hat{s}_0,p}
+
\mathit{pre}^{-}_{a,p}
\le
1,
\qquad
\forall (\hat{s}_0,a,\hat{s}_1)\in\hat{E},\; p\in[m].
\]
Thus, if predicate $p$ is a positive precondition of $a$, it must be true in
every observed source state of $a$; if it is a negative precondition, it must be
false.

Constraint $\mathrm{(C2)}$ requires the learned effects of $\mathbf{add}_a,  \mathbf{del}_a$ to produce the
 successor state $\hat{s_1}$. Expanding the \strips transition
$\tau_{\hat{\omega}_a}$ to linear constraints for every
$(\hat{s}_0,a,\hat{s}_1)\in\hat{E}$ and $p$:
\[
\hat{x}_{\hat{s}_1,p}
\ge
\mathit{add}_{a,p},
\qquad
\hat{x}_{\hat{s}_1,p}
+
\mathit{del}_{a,p}
\le
1,
\]
\[
\hat{x}_{\hat{s}_1,p}
-
\hat{x}_{\hat{s}_0,p}
\le
\mathit{add}_{a,p}
+
\mathit{del}_{a,p},
\qquad
\hat{x}_{\hat{s}_0,p}
-
\hat{x}_{\hat{s}_1,p}
\le
\mathit{add}_{a,p}
+
\mathit{del}_{a,p}.
\]
The first inequality forces added predicates to be true in the successor, while
the second forces deleted predicates to be false. The last two inequalities
encode inertia: if action $a$ neither adds nor deletes predicate $p$, then the
truth value of $p$ must be unchanged from $\hat{s}_0$ to $\hat{s}_1$.

Finally, we enforce the usual consistency constraints
\[
\mathit{add}_{a,p}
+
\mathit{del}_{a,p}
\le
1,
\qquad
\mathit{pre}^{+}_{a,p}
+
\mathit{pre}^{-}_{a,p}
\le
1,
\qquad
\forall a\in\mathcal{A},\; p\in[m],
\]
so an operator cannot both add and delete the same predicate, or require the
same predicate to be both true and false.

In practice, we keep the observed-applicability constraints in $\mathrm{C1}$ hard since every observed edge is a successful execution, and soften only the effect/inertia constraints in $\mathrm{(C2)}$ with transition slack.
Let $\hat{e}=(\hat{s}_0,a,\hat{s}_1)$ index an observed edge. We introduce a
slack variable $\xi_{\hat{e},p}$ for each observed edge--predicate pair, which
allows isolated violations when the visual abstraction aliases states or
contains noise. The exact softened linear constraints are given in
Appendix~\ref{app:exact-cpsat-encoding}.

\paragraph{Trusted missing state-action pairs provide negative evidence.}
We now encode the negative-applicability requirement $\mathrm{(C3)}$. Observed
executions alone do not force informative preconditions:
an operator with empty preconditions can explain all observed executions, but it
would also make the action applicable from every learned state. To learn when
actions are not applicable, we use the trusted missing pairs
$\hat{\mathcal N}$ as negative evidence~\cite{2024ainetonegativelearning}.

For each trusted missing pair $(\hat{s},a)\in\hat{\mathcal{N}}$, the learned
preconditions of action $a$ should be violated in state $\hat{s}$. Thus we require:
\[
\sum_{p=1}^{m}
\left(
\mathit{pre}^{+}_{a,p}(1-\hat{x}_{\hat{s},p})
+
\mathit{pre}^{-}_{a,p}\hat{x}_{\hat{s},p}
\right)
+
\eta_{\hat{s},a}
\ge
1,
\qquad
\forall (\hat{s},a)\in\hat{\mathcal{N}} .
\]
The first term counts violated positive preconditions, and the second counts
violated negative preconditions. Thus, every trusted missing action must either
violate at least one learned precondition or pay applicability slack
$\eta_{\hat{s},a}$. This prevents the precondition model from collapsing to the
empty set while making exceptions to the closed-world assumption explicit.

The products above are between binary decision variables. In the CP-SAT
encoding, we linearize them with auxiliary violation variables, so that the
constraint simply enforces that at least one precondition of $a$ is incompatible
with the learned predicate assignment of $\hat{s}$, unless slack is used. The
exact linearized constraints are given in
\autoref{app:exact-cpsat-encoding}.

\paragraph{Lexicographic model selection.}
Many predicate/operator assignments may satisfy the semantic constraints.
We therefore choose among feasible models using a lexicographic objective:
\begin{equation}
\min_{\mathrm{lex}}
\left(
\sum_{\hat{e}\in\hat{E}}\sum_{p=1}^{m}\xi_{\hat{e},p},
\;
\sum_{(\hat{s},a)\in\hat{\mathcal{N}}}\eta_{\hat{s},a},
\;
\sum_{a\in\mathcal{A}}\sum_{p=1}^{m}
(\mathit{add}_{a,p}+\mathit{del}_{a,p}),
\;
\sum_{a\in\mathcal{A}}\sum_{p=1}^{m}
(\mathit{pre}^{+}_{a,p}+\mathit{pre}^{-}_{a,p})
\right)
\label{eq:lex-objective-main}
\end{equation}
The priority order reflects what we want from the learned model. First,
we minimize transition slack, because observed executions should be
explained whenever possible. Second, we minimize applicability slack, so
that trusted missing state-action pairs are ruled out by learned preconditions rather
than ignored. Third, we prefer sparse effects, since actions in
manipulation domains usually change only a small number of task-relevant
facts. Finally, among equally good explanations, we prefer sparse
preconditions.

\paragraph{Guarantees.}
In Appendix~\ref{app:operator-learning-guarantees} we provide formal guarantees for \method. The key point is that the constraints define a space of grounded propositional \strips models. Proposition~\ref{prop:strips-compatible} shows that every feasible assignment induces well-formed \strips operators, regardless of whether slack is used. In the zero-slack regime,~\autoref{thm:sound-complete} establishes soundness and completeness of the encoding: feasible solutions correspond exactly to $m$-predicate grounded \strips models that reproduce the observed transitions and rule out the selected trusted missing state-action pairs. The complete CP-SAT encoding is given in Appendix~\ref{app:exact-cpsat-encoding}.

\subsection{Visual Predicate Classification and Test-Time Planning } 
\label{sec:predicates}
The previous stage produces a predicate target
$\hat{x}_{\hat{s}}\in\{0,1\}^m$ for each learned state
$\hat{s}\in\hat{\mathcal S}$. These targets are obtained offline by discrete
optimization and cannot be directly applied to a new image. We therefore train
a visual predicate classifier $b_\phi:\mathcal O\rightarrow[0,1]^m$ to predict
the learned predicate vector from pixels.

Let
$\mathcal{D}_{\mathrm{pred}}=\{(o_j,\hat{s}_j)\}_{j=1}^{N_{\mathrm{pred}}}$
denote the set of training observations paired with their learned state
identity from the visual task graph. For each pair $(o_j,\hat{s}_j)$, we have the target predicate vector $\hat{x}_{\hat{s}_j}$. We then train $b_\phi$ with a multi-label binary prediction
loss:
$
\mathcal{L}_{\mathrm{pred}}
=
\sum_{(o_j,\hat{s}_j)\in\mathcal{D}_{\mathrm{pred}}}
\operatorname{BCE}
\left(
b_\phi(o_j),
\hat{x}_{\hat{s}_j}
\right).
$

At test time, the input is a start image $o_{\mathrm{start}}$ and a goal image
$o_{\mathrm{goal}}$. The visual predicate classifier maps them to predicate
probabilities, which we binarize as
$\hat{\mathbf{x}}_{\mathrm{start}}=\operatorname{bin}(b_\phi(o_{\mathrm{start}}))$
and
$\hat{\mathbf{x}}_{\mathrm{goal}}=\operatorname{bin}(b_\phi(o_{\mathrm{goal}}))$.
We then solve a classical planning problem in the learned predicate space with
initial state $\hat{\mathbf{x}}_{\mathrm{start}}$, goal state
$\hat{\mathbf{x}}_{\mathrm{goal}}$, and operator set $\hat{\Omega}$. During
search, an action identifier $a\in\mathcal A$ is considered only if the learned
operator $\hat{\omega}_a$ is applicable in the current predicate state $x$; the
successor is computed by the learned \strips{} transition
$x'=\tau_{\hat{\omega}_a}(x)$. Search terminates when the goal predicate code is
reached and returns the corresponding action sequence
$\pi=(a_1,\ldots,a_H)$.

\section{Experiments \&  Results}
\begin{wraptable}{r}{0.6\textwidth}
\vspace{-4.0em}
\centering
\footnotesize
\setlength{\tabcolsep}{2.6pt}
\caption{Domain and learned abstraction statistics.}
\label{tab:domain-stats}
\begin{tabular}{lrrr|rrrrr}
\toprule
& \multicolumn{3}{c|}{\textbf{Domain Statistics}}
& \multicolumn{5}{c}{\textbf{\method Statistics}} \\
\cmidrule(lr){2-5}\cmidrule(lr){5-8}
\textbf{Domain}
& $|\mathcal A|$
& $|\mathcal D_O|$
& $|\mathcal{S}^{\star}|$
& $|\hat{\mathcal{S}}|$
& $|\hat{\mathcal{X}}|$
& $m$
& $\sum \xi$ 
& $\sum\eta$ \\
\midrule
BlocksWorld      & 18 & 5{,}000  & 16  & 16  & 16  & 9 &0& 0  \\
DinnerTable      & 70 & 12{,}000 & 101 & 101 & 101 & 35&0& 9  \\
DinnerTable Real  & 64 & 3{,}000  & 71  & 111 & 71  & 35&0& 13 \\
\bottomrule
\end{tabular}
\vspace{-1.2em}
\end{wraptable}
\paragraph{Visual Planning Domains:} We evaluate the performance of \method in three image-based rearrangement domains. \textbf{BlocksWorld} consists  of three identical blocks and three stacking regions as in \cite{ajith2026learningdiscreteabstractionsvisual}. \textbf{DinnerTable} is a simulated tabletop rearrangement problem with five distinct objects and seven possible placements. \textbf{DinnerTable Real} uses real images from the same tabletop rearrangement setting, with additional static distractor objects such as a plate, bowl, and shelf, as shown in \autoref{fig:intro}.
For evaluation purposes, we manually create ground-truth transition task graphs
$G^{\star}=(\mathcal{S}^{\star},\mathcal{E}^{\star})$, for each planning domain that explain $D_\mathcal{O}$. The statistics for each domain are reported in \autoref{tab:domain-stats}. We note that for \textbf{DinnerTable Real} and \textbf{DinnerTable}, we explored a subset of the potential task space with the collected dataset $D_\mathcal{O}$.          
\paragraph{Visual Planning Baselines.} We compare STRIPS-WM against four baselines. \newline 
\textbf{WM-Rollout:} rolls out candidate action sequences with the learned
action-conditioned FSQ dynamics model and selects the sequence whose predicted code is
closest to the goal. This follows the spirit of rollout-based visual MPC and latent
world-model planning~\cite{levine2016end,ichter2019robot,wu2022daydreamer}. \newline
\textbf{WM-BFS:} uses the same learned dynamics model, but replaces sequence rollout
with discrete graph search in the FSQ space: from the start code, it applies each action
identifier, enqueues unseen successor codes, and stops when the goal code is reached.
This follows latent graph-search approaches to visual planning~\cite{bagatella2021planning}. \newline
\textbf{LSR}~\cite{lippi_enabling_2023} learns a continuous visual latent space from
image transitions, clusters latent observations into roadmap nodes, and adds directed
edges from observed transitions. It can be viewed as learning a latent task graph
$\hat{G}=(\hat{\mathcal{S}},\hat{E})$, with planning performed by graph search over the roadmap. \newline
\textbf{LatPlan-AMA3}~\cite{asailatplan} is the most similar work to \method.
 It learns \strips-style predicates, operators, binary latent image codes, and an action-conditioned neural transition model with add/delete/no-op style bit effects. We adapted the AMA3 model to use our provided action identifiers for a fair comparison.   

\begin{figure}
    \centering
    \includegraphics[width=1\linewidth]{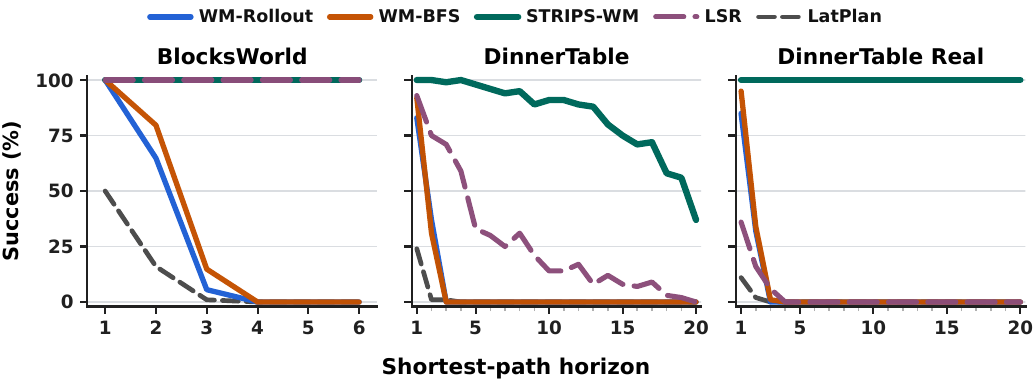}
    \vspace{-1em}
    \caption{\footnotesize Planning success as a function of shortest-path horizon. Each point is evaluated by executing the returned action sequence in the ground-truth transition graph.  Success therefore requires reaching the true goal state, not merely the goal predicted by the learned model.}
    \label{fig:experiments}
    \vspace{-2em}
\end{figure}

\paragraph{RQ1: Does \method learn the correct abstraction?}
The data in \autoref{tab:domain-stats} show how the learned abstraction compares
to the ground-truth task graph. Let $\hat{\mathcal X}$ denote the set of distinct predicate states produced by operator learning. In \textbf{BlocksWorld}, \method{} recovers the abstraction
exactly: $|\hat{\mathcal S}|=|\mathcal S^\star|=|\hat{\mathcal X}|=16$. The
model uses $m=9$ predicates and has zero transition and applicability slack
$(\sum\xi=0,\sum\eta=0)$. Thus, BlocksWorld lies in the exact zero-slack regime
of \autoref{thm:sound-complete}: the learned \strips{} model is sound and
complete with respect to the learned graph and the selected trusted missing
state-action pairs. In \textbf{DinnerTable}, \method{} again recovers the ground-truth abstraction,
with \(|\hat{\mathcal S}|=|\mathcal S^\star|=|\hat{\mathcal X}|=101\), and has
zero transition slack \((\sum\xi=0)\) and a small amount of
applicability slack \((\sum\eta=9)\), indicating residual applicability errors
in the learned precondition model. In \textbf{DinnerTable Real}, the visual abstraction over-splits the task space,
producing \(|\hat{\mathcal S}|=111\) visual nodes for only
\(|\mathcal S^\star|=71\) task states. The CP-SAT solver collapses these into
\(|\hat{\mathcal X}|=71\) distinct predicate states, again with zero transition
slack \((\sum\xi=0)\). The remaining applicability slack \((\sum\eta=13)\) is
therefore best interpreted as a fragmentation-induced negative-evidence conflict,
rather than a failure to model the observed transition dynamics.

\paragraph{RQ2: How well does \method scale to long-horizon tasks?}
We evaluate image-to-plan performance using the ground-truth transition graph
\(G^{\star}=(\mathcal{S}^{\star},\mathcal{E}^{\star})\). For each domain and
shortest-path horizon \(\ell\), we sample 200 start--goal image queries whose
endpoints are distance \(\ell\) in \(G^{\star}\). Each method receives only the
start and goal images and returns either an action sequence or no plan. Returned
plans are executed in \(G^{\star}\), and success is the fraction of queries whose plans reach the
ground-truth goal state. Thus, reaching the goal only in the learned model is not
sufficient. 

The results in \autoref{fig:experiments} show that \method achieves 100\%
success in BlocksWorld, consistent with the zero-slack abstraction in
\autoref{tab:domain-stats} and the exact regime of
\autoref{thm:sound-complete}. In DinnerTable Real, \method also achieves
100\% success despite using slack: here, slack reflects the collapse of an
over-split visual graph into the correct task-level \strips abstraction.
In simulated DinnerTable, however, the introduced slack corresponds to genuine
errors in the learned applicability model, which appear as failures at longer
horizons.

The baselines degrade quickly over longer horizons. WM-Rollout compounds latent prediction errors, and
WM-BFS searches the same FSQ space without explicit preconditions or persistent effects.
LSR solves BlocksWorld but relies on flat roadmap connectivity, which is less reliable
in larger long-horizon domains. LatPlan-AMA3 learns a STRIPS-style latent model, but
does not recover reliable predicates and operators in these visual settings.

%

\paragraph{RQ3: Are predicates better than monolithic graph search?}
\begin{wrapfigure}{r}{0.3\linewidth}
    \vspace{-2.0em}
    \centering
    \includegraphics[width=\linewidth]{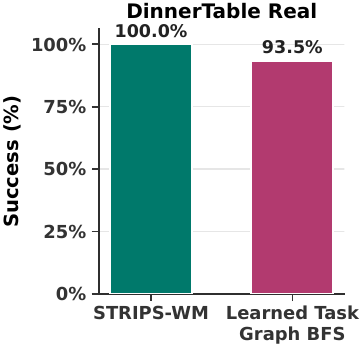}
    \vspace{-1.0em}
    \caption{\small Planning success on DinnerTable Real.}
    \label{fig:taskgraph}
    \vspace{-1.2em}
\end{wrapfigure}
We compare \method to direct graph search over the learned task graph
\(\hat{G}=(\hat{\mathcal S},\hat{\mathcal E})\). As shown in
\autoref{fig:taskgraph}, the learned \strips model achieves a higher success rate because the
CP-SAT layer can collapse over-split but action-equivalent visual nodes and repair
inconsistencies in the flat graph.

Furthermore, the predicate representation is more compact: instead of classifying among
monolithic graph nodes \(\hat{s}\in\hat{\mathcal S}\), \method predicts an
\(m\)-dimensional binary predicate vector \(\hat{\mathbf{x}}_{\hat{s}}\), with
\(m \ll |\hat{\mathcal S}|\). This reduces the perception problem in
\autoref{sec:predicates} and enables planning with a few symbolic operators rather than memorized graph edges.

\section{Limitations}
\method inherits three limitations. First, it assumes deterministic \strips dynamics with
fixed preconditions and add/delete effects, and therefore does not model stochastic
outcomes or hidden state. Second, the learned model depends on the quality and coverage
of \(\hat{G}\): visual aliasing or incomplete outgoing action coverage can produce
incorrect preconditions or unreliable negative evidence. Third, \method learns grounded
propositional operators rather than lifted relational schemas, so it does not 
generalize compositionally to new object sets or action arguments without additional
data.

%
%
\bibliography{SymbolicWMs}  

\appendix

\section*{Appendix}
\setcounter{section}{0}
\section{Architecture and Implementation Details}
\label{app:architecture}
This section gives additional implementation details for the three
learned components of \method: the image-grounded task graph in
\autoref{sec:world-model}, the operator-learning stage in
\autoref{sec:operator-learning}, and the visual predicate classifier in
\autoref{sec:predicates}. 

\subsection{Image-grounded task graph model}

To construct the image-grounded task graph
\(\hat{G}=(\hat{\mathcal{S}},\hat{E})\), we learn a discrete state
assignment
\(\alpha:\mathcal{O}\rightarrow\hat{\mathcal{S}}\). As shown in
\autoref{fig:worldmodel}, the model has a student image encoder, an EMA
teacher encoder, a finite-scalar-quantized latent code, an
action-conditioned latent dynamics model, and an inverse dynamics head.
The student encoder \(f_\theta\) maps the current observation \(o_t\) to
a continuous latent vector. Finite scalar quantization then maps this
vector to a discrete code
\[
z_t = q(f_\theta(o_t)).
\]
The successor observation \(o_{t+1}\) is encoded by an exponential-moving
average teacher encoder \(f_{\bar\theta}\). The teacher parameters are
updated after each optimization step as
\[
\bar{\theta}\leftarrow \mu \bar{\theta} + (1-\mu)\theta ,
\]
with \(\mu=0.996\). The teacher output is quantized with the same FSQ
quantizer to produce the target successor code
\[
z^{\mathrm{tar}}_{t+1}=q(f_{\bar\theta}(o_{t+1})).
\]
The action-conditioned latent dynamics model predicts a successor code
\(\hat{z}_{t+1}\) from \(z_t\) and the action identifier \(a_t\). The
inverse model predicts \(a_t\) from the current and successor codes,
which discourages collapsed representations that ignore action-relevant
visual differences.

We train this architecture with the following objective:
\[
\mathcal{L}_{\mathrm{total}}
=
\mathcal{L}_{\mathrm{dyn}}
+
\lambda_{\mathrm{inv}}\mathcal{L}_{\mathrm{inv}}
+
\lambda_{\mathrm{com}}\mathcal{L}_{\mathrm{com}}
+
\lambda_{\mathrm{aug}}\mathcal{L}_{\mathrm{aug}}
+
\lambda_{\mathrm{sep}}\mathcal{L}_{\mathrm{sep}}.
\]
The dynamics loss is
\[
\mathcal{L}_{\mathrm{dyn}}
=
\|\hat{z}_{t+1}-z^{\mathrm{tar}}_{t+1}\|_2^2,
\]
and trains the latent dynamics model to predict the teacher successor
code. The inverse loss is
\[
\mathcal{L}_{\mathrm{inv}}
=
\operatorname{CE}(\hat{a}_t,a_t),
\]
where \(\hat{a}_t\) is the inverse model prediction. This term encourages
the code to preserve information that distinguishes which action caused
a transition.

The commitment loss stabilizes the quantized representation. Let
\(\tilde{z}_t\) denote the continuous latent value immediately before
straight-through rounding, and let \(z_t\) denote the corresponding
quantized code. We use
\[
\mathcal{L}_{\mathrm{com}}
=
\|\tilde{z}_t-\operatorname{stopgrad}(z_t)\|_2^2.
\]
This pulls the encoder output toward the discrete FSQ grid point it is
assigned to, reducing oscillation near quantization boundaries while
leaving the quantized target fixed.

For real-image training, we also used mild same-image augmentation
consistency. If \(\operatorname{aug}(o_t)\) is an augmented version of the
same observation, then \(\mathcal{L}_{\mathrm{aug}}\) encourages
\(o_t\) and \(\operatorname{aug}(o_t)\) to receive the same latent code.
This term is used as a visual invariance regularizer.

\paragraph{Latent nondeterminism.}
We also monitor the determinism of the learned latent transition graph.
Given the encoded transitions
\[
(z_i,a_i,z'_i),
\]
we say that a latent state-action pair \((z,a)\) is
\emph{latent-nondeterministic} if it has more than one observed successor
code:
\[
\left|\{z' \mid (z,a,z') \text{ is observed}\}\right| > 1.
\]
This diagnostic uses only the learned codes and the action identifiers in
\(D_{\mathcal O}\). It does not use ground-truth symbolic states, object
metadata, or manually labeled predicates. Low latent nondeterminism is
desirable because the downstream CP-SAT stage learns deterministic
\strips operators. If a single learned code \(z\) with the same action
label \(a\) leads to multiple successor codes, then the visual abstraction
has either aliased different task states or introduced transition noise.

The predictive separation term is designed to reduce this failure mode.
When two examples have the same current code and the same action label
but different successor codes, \(\mathcal{L}_{\mathrm{sep}}\) encourages
their pre-quantized current latents to move apart. This makes it easier
for the quantizer to assign distinct codes to states that are visually
similar but action-relevantly different. 
\paragraph{Latent sweep and checkpoint selection.}
For each domain, we swept the FSQ latent dimensionality while keeping a
centered five-level scalar grid in each active dimension. The candidate
latent spaces were
\[
[5]^5,\qquad [5]^6,\qquad [5]^7,
\]
corresponding to \(B\in\{5,6,7\}\) FSQ dimensions with five levels per
dimension. We selected checkpoints using the total training loss and
latent-nondeterminism diagnostics, preferring lower latent nondeterminism
among comparable low-loss checkpoints.

\begin{table}[h]
\centering
\small
\begin{tabular}{lccc}
\toprule
Setting & BlocksWorld & DinnerTable & DinnerTable Real \\
\midrule
Encoder backbone & ConvNeXt-Tiny & ConvNeXt-Tiny & ConvNeXt-Tiny \\
Image resolution & \(228\times 228\) & \(228\times 228\) & \(228\times 228\) \\
FSQ latent sweep & \([5]^5,[5]^6,[5]^7\) & \([5]^5,[5]^6,[5]^7\) & \([5]^5,[5]^6,[5]^7\) \\
Selected FSQ levels & \([5]^5\) & \([5]^7\) & \([5]^7\) \\
Optimizer & AdamW & AdamW & AdamW \\
Learning rate & \(8\times 10^{-4}\) & \(8\times 10^{-4}\) & \(8\times 10^{-4}\) \\
Batch size & 64 & 64 & 64 \\
Training epochs & 170 & 170 & 170 \\
Teacher EMA decay \(\mu\) & 0.996 & 0.996 & 0.996 \\
Inverse loss weight \(\lambda_{\mathrm{inv}}\) & 5.0 & 5.0 & 5.0 \\
Commitment loss weight \(\lambda_{\mathrm{com}}\) & 0.05 & 0.05 & 0.05 \\
Predictive separation weight \(\lambda_{\mathrm{sep}}\) & 0.03 & 0.03 & 0.03 \\
\bottomrule
\end{tabular}
\caption{Image-grounded task-graph training hyperparameters. We swept the FSQ latent dimensionality
for each domain and selected the final checkpoint using training loss and latent-graph diagnostics.}
\label{tab:stage1-hyperparams}
\end{table}

\subsection{Operator-learning hyperparameters}

Given the learned task graph \(\hat{G}=(\hat{\mathcal S},\hat E)\), the
operator-learning stage assigns a binary predicate vector
\(\hat{\mathbf{x}}_{\hat{s}}\in\{0,1\}^{m}\) to each learned graph node
\(\hat{s}\in\hat{\mathcal S}\), and learns one grounded propositional
operator \(\hat{\omega}_a\) for each action identifier
\(a\in\mathcal A\). We use the CP-SAT formulation described in
\autoref{sec:operator-learning}, with the exact linearized constraints
given in \autoref{app:exact-cpsat-encoding}.

The main hyperparameter in this stage is the predicate dimension \(m\).
For each domain, we swept \(m\) and selected the smallest predicate
dimension that produced a compact STRIPS explanation within the solver
budget. In particular, we prioritized models that
reproduce the observed graph transitions with zero transition slack, and
then used the lexicographic objective in
\autoref{eq:lex-objective-main} to trade off applicability slack, effect
sparsity, and precondition sparsity. Each fixed-\(m\) CP-SAT run was
given a total wall-clock budget of 300 seconds. This budget is applied to
the complete solver run for that candidate predicate dimension, rather
than to each individual objective term.
The slack variables have different interpretations. Transition slack is
represented by variables \(\xi_{\hat e,p}\), one for each observed edge
\(\hat e=(\hat{s},a,\hat{s}')\in\hat E\) and predicate \(p\in[m]\).
These variables relax the requirement that the learned add/delete/inertia
effects exactly reproduce the observed successor predicate value. The
aggregate transition slack reported in the experiments is
\[
\sum \xi
=
\sum_{\hat e\in\hat E}\sum_{p=1}^{m}\xi_{\hat e,p}.
\]
Thus, \(\sum \xi=0\) means that every observed edge in the learned graph
is reproduced exactly in predicate space. Applicability slack is
represented by variables \(\eta_{\hat{s},a}\), one for each trusted
missing state-action pair \((\hat{s},a)\in\hat{\mathcal N}\). The
aggregate applicability slack is
\[
\sum \eta
=
\sum_{(\hat{s},a)\in\hat{\mathcal N}}\eta_{\hat{s},a}.
\]
Thus, \(\sum\eta>0\) does not mean that an observed edge is modeled
incorrectly; it means that some trusted missing action remained
applicable under the learned precondition model.

\begin{table}[h]
\centering
\small
\begin{tabular}{lccc}
\toprule
Setting & BlocksWorld & DinnerTable & DinnerTable Real \\
\midrule
Solver backend & CP-SAT & CP-SAT & CP-SAT \\
Wall-clock budget per fixed-\(m\) run & 300s & 300s & 300s \\
Selected predicate dimension \(m\) & 9 & 35 & 35 \\
Transition slack in selected solution \(\sum \xi\) & 0 & 0 & 0 \\
Applicability slack in selected solution \(\sum \eta\) & 0 & 9 & 13 \\
\bottomrule
\end{tabular}
\caption{Operator-learning settings and selected CP-SAT solutions.}
\label{tab:operator-learning-hyperparams}
\end{table}

The formal guarantees in \autoref{app:operator-learning-guarantees} apply
directly to these learned models. In particular,
\autoref{prop:strips-compatible} shows that every feasible CP-SAT
assignment induces well-formed grounded propositional \strips operators,
regardless of the slack values. When both transition slack and
applicability slack are zero, \autoref{thm:sound-complete} gives the
exact soundness and completeness statement for the learned graph and the
selected trusted missing state-action pairs.

\subsection{Visual predicate classifier}
\label{app:visual-predicate-classifier}
The operator-learning stage assigns a learned predicate vector
\(\hat{x}_{\hat{s}}\in\{0,1\}^{m}\) to each learned graph node
\(\hat{s}\in\hat{\mathcal S}\). These predicate assignments are defined only on the
offline learned task graph, so they cannot be directly applied to a new
start or goal image. We therefore train a visual predicate classifier
\(b_\phi:\mathcal O\rightarrow[0,1]^m\), as described in
\autoref{sec:predicates}. The classifier grounds the learned predicates
back into perception by predicting the predicate vector associated with
the graph node assigned to an image.

Let
\[
g_\phi(o)\in\mathbb{R}^{m}
\]
denote the classifier logits, and let
\[
b_\phi(o)=\sigma(g_\phi(o))
\]
denote the corresponding predicate probabilities.

For each visual transition \((o_i,a_i,o'_i)\in D_{\mathcal O}\), the
learned state assignment maps the images to graph nodes:
\[
\hat{s}_i=\alpha(o_i),
\qquad
\hat{s}'_i=\alpha(o'_i).
\]
We use the corresponding learned predicate assignments as binary
supervision vectors:
\[
y_i=\hat{x}_{\hat{s}_i},
\qquad
y'_i=\hat{x}_{\hat{s}'_i}.
\]
Thus, the predicate classifier is trained only from predicate assignments
learned by \method.

We train the classifier with binary cross-entropy on both the current and
successor images:
\[
\mathcal L_{\mathrm{BCE}}
=
\frac{1}{2}
\left[
\operatorname{BCE}(g_\phi(o_i),y_i)
+
\operatorname{BCE}(g_\phi(o'_i),y'_i)
\right].
\]
Here \(\operatorname{BCE}\) denotes binary cross-entropy with logits. We
also use a small confidence regularizer on the logits:
\[
\mathcal L_{\mathrm{margin}}
=
\frac{1}{2}
\left[
\operatorname{ReLU}(\gamma-|g_\phi(o_i)|)
+
\operatorname{ReLU}(\gamma-|g_\phi(o'_i)|)
\right],
\]
averaged over predicates. This term encourages predictions to move away from the binarization
threshold while the binary cross-entropy term anchors them to the learned
predicate assignments. The reported predicate-classifier objective is
\[
\mathcal L_{\mathrm{pred}}
=
\mathcal L_{\mathrm{BCE}}
+
\lambda_{\mathrm{margin}}\mathcal L_{\mathrm{margin}}.
\]

At test time, the classifier maps a start image and goal image to
predicate probabilities. We binarize each predicate independently:
\[
\hat{x}_{\mathrm{start}}
=
\operatorname{bin}(b_\phi(o_{\mathrm{start}})),
\qquad
\hat{x}_{\mathrm{goal}}
=
\operatorname{bin}(b_\phi(o_{\mathrm{goal}})),
\]
where
\[
\operatorname{bin}(b)_p =
\begin{cases}
1, & b_p \ge 0.5,\\
0, & b_p < 0.5.
\end{cases}
\]
Planning then uses \(\hat{x}_{\mathrm{start}}\),
\(\hat{x}_{\mathrm{goal}}\), and the learned operator set
\(\hat{\Omega}\) exactly as described in \autoref{sec:predicates}.

\begin{table}[h]
\centering
\small
\begin{tabular}{lccc}
\toprule
Setting & BlocksWorld & DinnerTable & DinnerTable Real \\
\midrule
Classifier backbone & ConvNeXt-Tiny &  ConvNeXt-Tiny &  ConvNeXt-Tiny \\
Input resolution & \(228\times 228\) & \(228\times 228\) & \(228\times 228\) \\
Output dimension & \(m=9\) & \(m=35\) & \(m=35\) \\
Optimizer & Adam & Adam & Adam \\
Learning rate & \(10^{-3}\) & \(10^{-3}\) & \(10^{-3}\) \\
Batch size & 64 & 64 & 64 \\
Training epochs & 50 & 150 & 150 \\
Margin \(\gamma\) & 1.0 & 1.0 & 1.0 \\
Margin weight \(\lambda_{\mathrm{margin}}\) & 0.1 & 0.1 & 0.1 \\
Binarization threshold & 0.5 & 0.5 & 0.5 \\
\bottomrule
\end{tabular}
\caption{Visual predicate classifier hyperparameters. The output dimension is the selected predicate dimension \(m\) from \autoref{tab:domain-stats}.}
\label{tab:predicate-classifier-hyperparams}
\end{table}

\section{Additional Theoretical Details}

\subsection{Exact CP-SAT Encoding for Operator Learning}
\label{app:exact-cpsat-encoding}
This section gives the exact binary optimization used for the
operator-learning stage in \autoref{sec:operator-learning}. The input is
the learned task graph \(\hat{G}=(\hat{\mathcal S},\hat E)\), together
with the trusted missing state-action pairs \(\hat{\mathcal N}\). For a
fixed predicate dimension \(m\), the CP-SAT solver assigns predicate
vectors to learned graph nodes and learns one grounded propositional
\strips operator for each action identifier.


For every learned graph node \(\hat{s}\in\hat{\mathcal S}\) and
predicate \(p\in[m]\), we introduce a binary predicate variable
\[
\hat{x}_{\hat{s},p}\in\{0,1\}.
\]
The full predicate vector for \(\hat{s}\) is
\[
\hat{\mathbf{x}}_{\hat{s}}
=
(\hat{x}_{\hat{s},1},\ldots,\hat{x}_{\hat{s},m})
\in\{0,1\}^{m}.
\]
For every action identifier \(a\in\mathcal A\) and predicate \(p\in[m]\),
we introduce binary operator-mask variables
\[
\mathit{add}_{a,p},\mathit{del}_{a,p}\in\{0,1\},
\qquad
\mathit{pre}^{+}_{a,p},\mathit{pre}^{-}_{a,p}\in\{0,1\}.
\]
We also use transition slack variables
\(\xi_{\hat e,p}\in\{0,1\}\). Here
\(\hat e=(\hat{s},a,\hat{s}')\in\hat E\) denotes an observed edge in the
learned task graph, and \(p\in[m]\) denotes a predicate. Thus
\(\xi_{\hat e,p}\) is the per-edge/per-predicate form of the aggregate
transition slack \(\sum\xi\) reported in the main text. We use
applicability slack variables
\(\eta_{\hat{s},a}\in\{0,1\}\) for trusted missing pairs
\((\hat{s},a)\in\hat{\mathcal N}\), and auxiliary violation and
distinctness variables introduced below.

\paragraph{Observed applications satisfy learned preconditions.}
Each observed edge
\[
\hat{e}=(\hat{s},a,\hat{s}')\in\hat E
\]
records a successful execution of action identifier \(a\) from
\(\hat{s}\) to \(\hat{s}'\). The action identifier \(a\) is the edge
label. Therefore, the learned preconditions of \(a\) must hold in the
predicate vector of the source node \(\hat{s}\). For every
\(\hat e=(\hat{s},a,\hat{s}')\in\hat E\) and every \(p\in[m]\), we impose
\begin{align}
\hat{x}_{\hat{s},p} &\ge \mathit{pre}^{+}_{a,p},
\label{eq:obs-pre-pos-app} \\
\hat{x}_{\hat{s},p} + \mathit{pre}^{-}_{a,p} &\le 1.
\label{eq:obs-pre-neg-app}
\end{align}
Thus, if predicate \(p\) is a positive precondition of \(a\), it must be
true in every observed source node for \(a\). If \(p\) is a negative
precondition, it must be false in every observed source node for \(a\).

\paragraph{Observed edges follow \strips effects with inertia.}
For each observed edge
\(\hat{e}=(\hat{s},a,\hat{s}')\in\hat E\) and each predicate
\(p\in[m]\), we impose
\begin{align}
\hat{x}_{\hat{s}',p} + \xi_{\hat{e},p}
&\ge \mathit{add}_{a,p},
\label{eq:obs-add-app} \\
\hat{x}_{\hat{s}',p} + \mathit{del}_{a,p}
&\le 1 + \xi_{\hat{e},p},
\label{eq:obs-del-app} \\
\hat{x}_{\hat{s}',p} - \hat{x}_{\hat{s},p}
&\le \mathit{add}_{a,p} + \mathit{del}_{a,p} + \xi_{\hat{e},p},
\label{eq:obs-inertia-1-app} \\
\hat{x}_{\hat{s},p} - \hat{x}_{\hat{s}',p}
&\le \mathit{add}_{a,p} + \mathit{del}_{a,p} + \xi_{\hat{e},p}.
\label{eq:obs-inertia-2-app}
\end{align}
The add/delete masks and positive/negative precondition masks are
disjoint:
\begin{equation}
\mathit{add}_{a,p} + \mathit{del}_{a,p} \le 1,
\qquad
\mathit{pre}^{+}_{a,p} + \mathit{pre}^{-}_{a,p} \le 1,
\label{eq:disjoint-app}
\end{equation}
for all \(a\in\mathcal A\) and \(p\in[m]\). When
\(\xi_{\hat{e},p}=0\), these constraints implement the standard bitwise
\strips update:
\[
\hat{x}_{\hat{s}',p}=
\begin{cases}
1, & \mathit{add}_{a,p}=1,\\
0, & \mathit{del}_{a,p}=1,\\
\hat{x}_{\hat{s},p}, &
\mathit{add}_{a,p}=\mathit{del}_{a,p}=0.
\end{cases}
\]
Thus, add effects set predicates to true, delete effects set predicates
to false, and all other predicates obey inertia.

\paragraph{Negative evidence from trusted missing pairs.}
The trusted missing state-action pairs are
\[
\hat{\mathcal N}
=
\{(\hat{s},a)\mid
\hat{s}\in\hat{\mathcal S}_{\mathrm{trust}},
\ \nexists \hat{s}' \text{ such that }
(\hat{s},a,\hat{s}')\in\hat E\}.
\]
For each \((\hat{s},a)\in\hat{\mathcal N}\), the learned preconditions
of action \(a\) should be violated in \(\hat{\mathbf{x}}_{\hat{s}}\),
unless applicability slack is used. We linearize these violations using
binary variables
\[
v^+_{\hat{s},a,p},v^-_{\hat{s},a,p}\in\{0,1\}.
\]
For every \((\hat{s},a)\in\hat{\mathcal N}\) and \(p\in[m]\), we impose
\begin{align}
v^{+}_{\hat{s},a,p} &\le \mathit{pre}^{+}_{a,p},
&
v^{+}_{\hat{s},a,p} &\le 1 - \hat{x}_{\hat{s},p},
&
v^{+}_{\hat{s},a,p} &\ge \mathit{pre}^{+}_{a,p} - \hat{x}_{\hat{s},p},
\label{eq:v-pos-app} \\
v^{-}_{\hat{s},a,p} &\le \mathit{pre}^{-}_{a,p},
&
v^{-}_{\hat{s},a,p} &\le \hat{x}_{\hat{s},p},
&
v^{-}_{\hat{s},a,p} &\ge
\mathit{pre}^{-}_{a,p} + \hat{x}_{\hat{s},p} - 1.
\label{eq:v-neg-app}
\end{align}
These constraints encode
\[
v^+_{\hat{s},a,p}
=
\mathit{pre}^{+}_{a,p}\wedge (1-\hat{x}_{\hat{s},p}),
\qquad
v^-_{\hat{s},a,p}
=
\mathit{pre}^{-}_{a,p}\wedge \hat{x}_{\hat{s},p}.
\]
We then require
\begin{equation}
\sum_{p=1}^{m}
\bigl(
v^{+}_{\hat{s},a,p}+v^{-}_{\hat{s},a,p}
\bigr)
+\eta_{\hat{s},a}
\ge 1,
\qquad
\forall(\hat{s},a)\in\hat{\mathcal N}.
\label{eq:negative-evidence-app}
\end{equation}
Thus, every trusted missing state-action pair must either violate at
least one learned precondition or pay applicability slack.


\paragraph{Optional distinctness constraints.}
The solver may optionally require selected learned graph nodes to receive
different predicate vectors. Let
\(\mathcal P_{\mathrm{dist}}\) denote the set of unordered node pairs on
which this constraint is imposed. Full distinctness corresponds to
including every pair
\(\{\hat{s},\hat{r}\}\subseteq\hat{\mathcal S}\), while the relaxed
settings used in some experiments impose fewer or no distinctness
constraints.

For each \(\{\hat{s},\hat{r}\}\in\mathcal P_{\mathrm{dist}}\) and
predicate \(p\in[m]\), we introduce a binary difference variable
\(d_{\hat{s},\hat{r},p}\in\{0,1\}\) and impose
\begin{align}
d_{\hat{s},\hat{r},p}
&\ge \hat{x}_{\hat{s},p} - \hat{x}_{\hat{r},p},
\label{eq:diff-1-app} \\
d_{\hat{s},\hat{r},p}
&\ge \hat{x}_{\hat{r},p} - \hat{x}_{\hat{s},p},
\label{eq:diff-2-app} \\
d_{\hat{s},\hat{r},p}
&\le \hat{x}_{\hat{s},p} + \hat{x}_{\hat{r},p},
\label{eq:diff-3-app} \\
d_{\hat{s},\hat{r},p}
&\le 2 - \hat{x}_{\hat{s},p} - \hat{x}_{\hat{r},p},
\label{eq:diff-4-app} \\
\sum_{p=1}^{m} d_{\hat{s},\hat{r},p}
&\ge 1.
\label{eq:state-distinct-app}
\end{align}
Together, these constraints enforce
\[
\hat{\mathbf{x}}_{\hat{s}}\neq\hat{\mathbf{x}}_{\hat{r}}
\]
for each selected pair
\(\{\hat{s},\hat{r}\}\in\mathcal P_{\mathrm{dist}}\).

\paragraph{Solver-side regularization.}
The constraints above are the semantic core of the formulation. In the
implementation, we also use solver-side choices such as
predicate-frequency symmetry breaking, anti-collapse constraints, and
relaxed distinctness over selected node pairs. These choices affect which
feasible solution is preferred or how efficiently it is found, but they
do not change the \strips form of the learned operators.

\subsection{Formal Guarantees for Operator Learning}
\label{app:operator-learning-guarantees}

The guarantees below are stated directly for the learned task graph
\(\hat{G}=(\hat{\mathcal S},\hat E)\) and the selected trusted missing
pairs \(\hat{\mathcal N}\). They do not assume access to the full
physical state space. Slack variables determine how strictly the learned
model must agree with \(\hat E\) and \(\hat{\mathcal N}\), but they do
not change the form of the learned operators.

\paragraph{Trusted missing-pair assumption.}
All statements involving \(\hat{\mathcal N}\) are relative to the trusted
set of missing state-action pairs selected by the method. Thus, the
negative-evidence guarantees below should be read as guarantees about
pairs in \(\hat{\mathcal N}\) itself. When
\(\hat{\mathcal S}_{\mathrm{trust}}\) corresponds to graph nodes with
complete outgoing action-label information, these guarantees have the
standard interpretation that missing labels imply inapplicability.

For any feasible assignment, define
\[
\hat{\mathbf{x}}_{\hat{s}}
=
(\hat{x}_{\hat{s},1},\ldots,\hat{x}_{\hat{s},m})
\in\{0,1\}^{m}
\]
for each \(\hat{s}\in\hat{\mathcal S}\), and define
\[
\operatorname{Add}(a)
=
\{p\in[m]\mid \mathit{add}_{a,p}=1\},
\qquad
\operatorname{Del}(a)
=
\{p\in[m]\mid \mathit{del}_{a,p}=1\},
\]
\[
\operatorname{Pre}^{+}(a)
=
\{p\in[m]\mid \mathit{pre}^{+}_{a,p}=1\},
\qquad
\operatorname{Pre}^{-}(a)
=
\{p\in[m]\mid \mathit{pre}^{-}_{a,p}=1\}.
\]
The induced grounded propositional \strips operator for action identifier
\(a\) is
\[
\hat{\omega}_a
=
\bigl\langle
\operatorname{Pre}^{+}(a),
\operatorname{Pre}^{-}(a),
\operatorname{Add}(a),
\operatorname{Del}(a)
\bigr\rangle.
\]
Its successor function is
\[
\tau_{\hat{\omega}_a}(\mathbf{x})_p
=
\begin{cases}
1, & p\in\operatorname{Add}(a),\\
0, & p\in\operatorname{Del}(a),\\
x_p, & \text{otherwise.}
\end{cases}
\]
Action \(a\) is applicable in \(\mathbf{x}\) if
\(x_p=1\) for all \(p\in\operatorname{Pre}^{+}(a)\) and
\(x_p=0\) for all \(p\in\operatorname{Pre}^{-}(a)\).

\begin{lemma}[Well-formed operators]
\label{lem:well-formed}
For every feasible assignment and every action identifier
\(a\in\mathcal A\),
\[
\operatorname{Add}(a)\cap \operatorname{Del}(a)=\varnothing
\qquad\text{and}\qquad
\operatorname{Pre}^{+}(a)\cap \operatorname{Pre}^{-}(a)=\varnothing.
\]
Hence each induced operator \(\hat{\omega}_a\) is a well-formed
grounded propositional \strips operator.
\end{lemma}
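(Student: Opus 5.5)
The plan is a direct argument from the disjointness constraints in \eqref{eq:disjoint-app}, which are imposed for every $a\in\mathcal A$ and every $p\in[m]$ as hard constraints. Slack variables never appear in them, so the argument does not depend on the values of $\xi$ or $\eta$.

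Fix a feasible assignment and an action identifier $a$. Suppose, for contradiction, that some $p\in\operatorname{Add}(a)\cap\operatorname{Del}(a)$. By the definitions of $\operatorname{Add}(a)$ and $\operatorname{Del}(a)$, this means $\mathit{add}_{a,p}=1$ and $\mathit{del}_{a,p}=1$. Then $\mathit{add}_{a,p}+\mathit{del}_{a,p}=2>1$, which violates the first inequality of \eqref{eq:disjoint-app}. So the intersection is empty.

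The identical argument applied to $\mathit{pre}^{+}_{a,p}$ and $\mathit{pre}^{-}_{a,p}$, using the second inequality of \eqref{eq:disjoint-app}, gives $\operatorname{Pre}^{+}(a)\cap\operatorname{Pre}^{-}(a)=\varnothing$.

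For the ``hence'' clause, I would recall the requirements of a well-formed grounded propositional \strips operator from Section~\ref{sec:strips_background}:
\begin{itemize}
\item[(i)] four subsets (equivalently, binary masks) of the predicate vocabulary $[m]$;
\item[(ii)] disjoint add and delete masks;
\item[(iii)] consistent preconditions, meaning no predicate is required to be both true and false.
\end{itemize}
Requirement (i) holds by construction, because all mask variables are binary. Requirements (ii) and (iii) are exactly the two disjointness facts just shown.

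It is also worth noting why disjointness matters: it makes the case definition of $\tau_{\hat{\omega}_a}$ unambiguous. No $p$ can fall into both the ``$1$'' case and the ``$0$'' case, so the successor function is well defined. It therefore coincides with the bitwise form $\mathbf{add}_a\lor(\mathbf{x}\land\neg\mathbf{del}_a)$ on applicable states.

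There is no real obstacle here; the only point needing care is to stress that \eqref{eq:disjoint-app} is never relaxed by slack. This is what makes the conclusion hold for every feasible assignment, including those with positive $\sum\xi$ or $\sum\eta$.
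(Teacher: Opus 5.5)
Your proposal is correct and takes the same approach as the paper, whose one-line proof also derives both disjointness claims directly from the hard constraints \(\mathit{add}_{a,p}+\mathit{del}_{a,p}\le 1\) and \(\mathit{pre}^{+}_{a,p}+\mathit{pre}^{-}_{a,p}\le 1\) in \eqref{eq:disjoint-app}. Your additional remarks are accurate but go beyond the paper's argument: slack variables never appear in these constraints, and disjointness makes the case definition of \(\tau_{\hat{\omega}_a}\) unambiguous.
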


\begin{proof}
This follows directly from \autoref{eq:disjoint-app}, which enforces
\[
\mathit{add}_{a,p}+\mathit{del}_{a,p}\le 1
\qquad\text{and}\qquad
\mathit{pre}^{+}_{a,p}+\mathit{pre}^{-}_{a,p}\le 1
\]
for all \(a\in\mathcal A\) and \(p\in[m]\).
\end{proof}

\begin{lemma}[Bitwise \strips semantics under zero transition slack]
\label{lem:bitwise-strips}
Fix an observed edge
\(\hat{e}=(\hat{s},a,\hat{s}')\in\hat E\) and a predicate \(p\in[m]\).
If \(\xi_{\hat{e},p}=0\), then
\[
\hat{x}_{\hat{s}',p}
=
\tau_{\hat{\omega}_a}(\hat{\mathbf{x}}_{\hat{s}})_p.
\]
Equivalently, when the transition slack for this edge-predicate pair is
zero, the learned effects obey exact \strips add/delete/inertia
semantics for predicate \(p\).
\end{lemma}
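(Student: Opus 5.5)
The plan is a direct case analysis on the mask values $(\mathit{add}_{a,p},\mathit{del}_{a,p})$, using only the transition constraints \autoref{eq:obs-add-app}--\autoref{eq:obs-inertia-2-app} with $\xi_{\hat e,p}=0$ substituted. By \autoref{lem:well-formed}, equivalently \autoref{eq:disjoint-app}, the case $\mathit{add}_{a,p}=\mathit{del}_{a,p}=1$ cannot occur. That leaves three exhaustive cases, and they match the three branches in the definition of $\tau_{\hat{\omega}_a}$.

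\emph{Case $\mathit{add}_{a,p}=1$.} Here $p\in\operatorname{Add}(a)$. With zero slack, \autoref{eq:obs-add-app} reads $\hat{x}_{\hat{s}',p}\ge 1$, so $\hat{x}_{\hat{s}',p}=1=\tau_{\hat{\omega}_a}(\hat{\mathbf{x}}_{\hat{s}})_p$.

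\emph{Case $\mathit{del}_{a,p}=1$ (hence $\mathit{add}_{a,p}=0$).} Here $p\in\operatorname{Del}(a)\setminus\operatorname{Add}(a)$. With zero slack, \autoref{eq:obs-del-app} gives $\hat{x}_{\hat{s}',p}\le 0$, so $\hat{x}_{\hat{s}',p}=0$, which again matches $\tau$.

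\emph{Case $\mathit{add}_{a,p}=\mathit{del}_{a,p}=0$.} Then \autoref{eq:obs-inertia-1-app} and \autoref{eq:obs-inertia-2-app} become $\hat{x}_{\hat{s}',p}-\hat{x}_{\hat{s},p}\le 0$ and $\hat{x}_{\hat{s},p}-\hat{x}_{\hat{s}',p}\le 0$. Together these force $\hat{x}_{\hat{s}',p}=\hat{x}_{\hat{s},p}$, which is the inertia branch of $\tau$.

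There is no real obstacle here. The only point needing care is the branch ordering in $\tau_{\hat{\omega}_a}$: the ``add'' branch is listed first, so the definition would be ambiguous if $p$ lay in both $\operatorname{Add}(a)$ and $\operatorname{Del}(a)$. Disjointness from \autoref{lem:well-formed} removes that ambiguity. It also ensures the inequalities we invoke in each case are never jointly contradictory on a feasible assignment.

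Note that the precondition constraints \autoref{eq:obs-pre-pos-app}--\autoref{eq:obs-pre-neg-app} are not needed for this per-predicate identity. They matter only for applicability, which presumably enters a later result rather than this lemma.
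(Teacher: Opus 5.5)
Your proposal is correct and follows essentially the same route as the paper: disjointness from \autoref{eq:disjoint-app} leaves the three cases $\mathit{add}_{a,p}=1$, $\mathit{del}_{a,p}=1$, and $\mathit{add}_{a,p}=\mathit{del}_{a,p}=0$. These are handled with $\xi_{\hat e,p}=0$ by \autoref{eq:obs-add-app}, \autoref{eq:obs-del-app}, and the two inertia inequalities respectively, matching the branches of $\tau_{\hat{\omega}_a}$. Your side remark that the precondition constraints are not needed also matches the paper, which uses them only in \autoref{lem:observed-preconditions}.
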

\begin{proof}
By \autoref{eq:disjoint-app}, the pair
\((\mathit{add}_{a,p},\mathit{del}_{a,p})\) can take only one of three
forms.

If \(\mathit{add}_{a,p}=1\), then \(\mathit{del}_{a,p}=0\), and
\autoref{eq:obs-add-app} gives
\[
\hat{x}_{\hat{s}',p}\ge 1.
\]
Thus \(\hat{x}_{\hat{s}',p}=1\).

If \(\mathit{del}_{a,p}=1\), then \(\mathit{add}_{a,p}=0\), and
\autoref{eq:obs-del-app} gives
\[
\hat{x}_{\hat{s}',p}+1\le 1.
\]
Thus \(\hat{x}_{\hat{s}',p}=0\).

If \(\mathit{add}_{a,p}=\mathit{del}_{a,p}=0\), then
\autoref{eq:obs-inertia-1-app}--\autoref{eq:obs-inertia-2-app} reduce to
\[
\hat{x}_{\hat{s}',p}-\hat{x}_{\hat{s},p}\le 0
\qquad\text{and}\qquad
\hat{x}_{\hat{s},p}-\hat{x}_{\hat{s}',p}\le 0,
\]
which imply
\(\hat{x}_{\hat{s}',p}=\hat{x}_{\hat{s},p}\). These are exactly the
three cases in the definition of
\(\tau_{\hat{\omega}_a}(\hat{\mathbf{x}}_{\hat{s}})_p\).
\end{proof}

\begin{corollary}[Exact reproduction when transition slack is zero]
\label{cor:exact-observed}
If \(\xi_{\hat{e},p}=0\) for all \(p\in[m]\) on an observed edge
\(\hat{e}=(\hat{s},a,\hat{s}')\), then
\[
\hat{\mathbf{x}}_{\hat{s}'}
=
\tau_{\hat{\omega}_a}(\hat{\mathbf{x}}_{\hat{s}}).
\]
In particular, if \(\sum\xi=0\), then every observed edge in \(\hat E\)
is reproduced exactly in predicate space.
\end{corollary}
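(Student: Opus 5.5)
The plan is to apply \autoref{lem:bitwise-strips} coordinatewise. Fix an observed edge \(\hat{e}=(\hat{s},a,\hat{s}')\in\hat E\), and assume \(\xi_{\hat{e},p}=0\) for every \(p\in[m]\). For each \(p\), \autoref{lem:bitwise-strips} gives
\[
\hat{x}_{\hat{s}',p}=\tau_{\hat{\omega}_a}(\hat{\mathbf{x}}_{\hat{s}})_p .
\]
The successor function \(\tau_{\hat{\omega}_a}\) is defined componentwise. So agreement in each of the \(m\) coordinates is exactly equality of the vectors \(\hat{\mathbf{x}}_{\hat{s}'}\) and \(\tau_{\hat{\omega}_a}(\hat{\mathbf{x}}_{\hat{s}})\) in \(\{0,1\}^m\), which is the first claim.

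For the second claim, note that every slack variable \(\xi_{\hat{e},p}\) is binary and hence nonnegative. A sum of nonnegative terms is zero only if each term is zero, so \(\sum\xi=\sum_{\hat e\in\hat E}\sum_{p=1}^m\xi_{\hat e,p}=0\) forces \(\xi_{\hat e,p}=0\) for every edge and predicate. Applying the first part to each \(\hat e\in\hat E\) then gives exact reproduction of every observed edge in predicate space.

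It may be worth noting that applicability on the source state is not needed for the equality itself. It is guaranteed separately, because the observed-applicability constraints \eqref{eq:obs-pre-pos-app}--\eqref{eq:obs-pre-neg-app} are hard. Hence \(\tau_{\hat{\omega}_a}\) is indeed defined at \(\hat{\mathbf{x}}_{\hat{s}}\) as a partial transition function.

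There is no substantive obstacle: the argument is bookkeeping on top of \autoref{lem:bitwise-strips}. The only point needing care is the nonnegativity step that turns the aggregate condition \(\sum\xi=0\) into per-variable zero slack.
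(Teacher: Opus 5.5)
Your proposal is correct and is essentially the paper's proof, which simply applies \autoref{lem:bitwise-strips} component-wise for all \(p\in[m]\). Your nonnegativity argument, which turns \(\sum\xi=0\) into \(\xi_{\hat e,p}=0\) for every edge and predicate, supplies the step the paper leaves implicit for the ``in particular'' clause; your remark on applicability is a harmless aside.
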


\begin{proof}
Apply \autoref{lem:bitwise-strips} component-wise for all \(p\in[m]\).
\end{proof}

\begin{lemma}[Observed edges satisfy learned preconditions]
\label{lem:observed-preconditions}
For every observed edge
\((\hat{s},a,\hat{s}')\in\hat E\), action \(a\) is applicable in
\(\hat{\mathbf{x}}_{\hat{s}}\).
\end{lemma}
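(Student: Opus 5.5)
The plan is to argue directly from the hard observed-applicability constraints \autoref{eq:obs-pre-pos-app} and \autoref{eq:obs-pre-neg-app}. These constraints carry no slack variable, so they hold in every feasible assignment, whatever the values of $\xi$ or $\eta$. Fix an observed edge $(\hat{s},a,\hat{s}')\in\hat E$. By the definition of applicability recalled just before \autoref{lem:well-formed}, two things must be shown:
\begin{itemize}
\item $\hat{x}_{\hat{s},p}=1$ for every $p\in\operatorname{Pre}^{+}(a)$;
\item $\hat{x}_{\hat{s},p}=0$ for every $p\in\operatorname{Pre}^{-}(a)$.
\end{itemize}

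For the first item, take $p\in\operatorname{Pre}^{+}(a)$, so $\mathit{pre}^{+}_{a,p}=1$. Then \autoref{eq:obs-pre-pos-app} gives $\hat{x}_{\hat{s},p}\ge 1$. Since $\hat{x}_{\hat{s},p}$ is binary, it equals $1$.

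For the second item, take $p\in\operatorname{Pre}^{-}(a)$, so $\mathit{pre}^{-}_{a,p}=1$. Then \autoref{eq:obs-pre-neg-app} gives $\hat{x}_{\hat{s},p}+1\le 1$, so $\hat{x}_{\hat{s},p}=0$.

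Together the two items are exactly the applicability condition for $\hat{\omega}_a$ at $\hat{\mathbf{x}}_{\hat{s}}$. The argument is the same case-wise reading of linear constraints on binary variables used in \autoref{lem:bitwise-strips}.

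There is no real obstacle. The only point to state explicitly is that these two constraints are imposed for every observed edge and every $p\in[m]$ with no slack term. This is what makes the lemma hold for all feasible assignments rather than only in the zero-slack regime.
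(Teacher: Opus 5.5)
Your proof is correct and matches the paper's argument: for a fixed observed edge, \autoref{eq:obs-pre-pos-app} forces $\hat{x}_{\hat{s},p}=1$ whenever $\mathit{pre}^{+}_{a,p}=1$, and \autoref{eq:obs-pre-neg-app} forces $\hat{x}_{\hat{s},p}=0$ whenever $\mathit{pre}^{-}_{a,p}=1$, which together give applicability. Your remark that these constraints carry no slack term, so the lemma holds for every feasible assignment, is also correct and consistent with how the paper states and uses it.
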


\begin{proof}
Fix \((\hat{s},a,\hat{s}')\in\hat E\) and \(p\in[m]\). If
\(\mathit{pre}^{+}_{a,p}=1\), then
\autoref{eq:obs-pre-pos-app} gives
\[
\hat{x}_{\hat{s},p}\ge 1,
\]
so \(\hat{x}_{\hat{s},p}=1\). If
\(\mathit{pre}^{-}_{a,p}=1\), then
\autoref{eq:obs-pre-neg-app} gives
\[
\hat{x}_{\hat{s},p}+1\le 1,
\]
so \(\hat{x}_{\hat{s},p}=0\). Therefore all positive and negative
preconditions of \(a\) hold in \(\hat{\mathbf{x}}_{\hat{s}}\), and
\(a\) is applicable.
\end{proof}

\begin{lemma}[Violation indicators are exact]
\label{lem:violation-indicators}
For every trusted missing pair
\((\hat{s},a)\in\hat{\mathcal N}\) and predicate \(p\in[m]\),
\autoref{eq:v-pos-app}--\autoref{eq:v-neg-app} enforce
\[
v^+_{\hat{s},a,p}=1
\iff
\bigl(\mathit{pre}^{+}_{a,p}=1
\text{ and }
\hat{x}_{\hat{s},p}=0\bigr),
\]
\[
v^-_{\hat{s},a,p}=1
\iff
\bigl(\mathit{pre}^{-}_{a,p}=1
\text{ and }
\hat{x}_{\hat{s},p}=1\bigr).
\]
Thus \(v^+_{\hat{s},a,p}\) and \(v^-_{\hat{s},a,p}\) are exact
indicators of positive- and negative-precondition violations.
\end{lemma}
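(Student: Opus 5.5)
The proof is a finite case analysis over the binary values of the two inputs to each indicator. Fix \((\hat{s},a)\in\hat{\mathcal N}\) and \(p\in[m]\). The constraints in \autoref{eq:v-pos-app} involve only the three binary variables \(v^+_{\hat{s},a,p}\), \(\mathit{pre}^{+}_{a,p}\), and \(\hat{x}_{\hat{s},p}\). Likewise, \autoref{eq:v-neg-app} involves only \(v^-_{\hat{s},a,p}\), \(\mathit{pre}^{-}_{a,p}\), and \(\hat{x}_{\hat{s},p}\). So I would handle the two indicators separately and, for each, check the four assignments of \((\mathit{pre},\hat{x})\in\{0,1\}^2\). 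In each assignment I show that the three inequalities force \(v\) to a single value, and that this value equals the conjunction in the statement.

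For \(v^+\), write \(P=\mathit{pre}^{+}_{a,p}\) and \(X=\hat{x}_{\hat{s},p}\).

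\begin{itemize}
\item If \(P=0\), the constraint \(v^+\le P\) gives \(v^+=0\), which matches the false conjunction.
\item If \(X=1\), the constraint \(v^+\le 1-X\) gives \(v^+=0\), which again matches.
\item If \(P=1\) and \(X=0\), the lower bound \(v^+\ge P-X=1\) forces \(v^+=1\). The two upper bounds are both \(1\), so this value is feasible.
\end{itemize}

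Hence \(v^+=1\) if and only if \(P=1\) and \(X=0\).

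For \(v^-\) the argument is symmetric, with \(Q=\mathit{pre}^{-}_{a,p}\).

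\begin{itemize}
\item If \(Q=0\), the constraint \(v^-\le Q\) gives \(v^-=0\).
\item If \(X=0\), the constraint \(v^-\le X\) gives \(v^-=0\).
\item If \(Q=1\) and \(X=1\), the lower bound \(v^-\ge Q+X-1=1\) forces \(v^-=1\), and both upper bounds allow it.
\end{itemize}

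Hence \(v^-=1\) if and only if \(Q=1\) and \(X=1\).

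There is no real obstacle here. The one point to state explicitly is that the indicators are \emph{pinned}, not merely bounded. The lower bounds in \autoref{eq:v-pos-app}--\autoref{eq:v-neg-app} rule out a solver setting \(v=0\) when a violation exists, and the upper bounds rule out spuriously setting \(v=1\) so that \autoref{eq:negative-evidence-app} holds without a real violation. Both directions of each biconditional rely on this. I would close by noting the consequence: the sum in \autoref{eq:negative-evidence-app} counts exactly the violated preconditions of \(a\) at \(\hat{\mathbf{x}}_{\hat{s}}\).
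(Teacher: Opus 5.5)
Your proposal is correct and follows essentially the paper's proof: a case split on the binary values of the precondition mask and the predicate variable. In both, the upper bounds force $v=0$ whenever no violation occurs and the lower bound forces $v=1$ when one does. You additionally write out the $v^-$ case, which the paper covers only by analogy, and you check that the forced value $v=1$ is compatible with the upper bounds; neither changes the argument.
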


\begin{proof}
We prove the claim for \(v^+_{\hat{s},a,p}\); the argument for
\(v^-_{\hat{s},a,p}\) is analogous. Since all variables are binary,
there are three relevant cases.

If \(\mathit{pre}^{+}_{a,p}=0\), then
\autoref{eq:v-pos-app} gives \(v^+_{\hat{s},a,p}\le 0\), so
\(v^+_{\hat{s},a,p}=0\). If
\(\mathit{pre}^{+}_{a,p}=1\) and
\(\hat{x}_{\hat{s},p}=1\), then
\autoref{eq:v-pos-app} gives
\(v^+_{\hat{s},a,p}\le 1-\hat{x}_{\hat{s},p}=0\), so
\(v^+_{\hat{s},a,p}=0\). Finally, if
\(\mathit{pre}^{+}_{a,p}=1\) and
\(\hat{x}_{\hat{s},p}=0\), then
\autoref{eq:v-pos-app} gives
\[
v^+_{\hat{s},a,p}
\ge
\mathit{pre}^{+}_{a,p}-\hat{x}_{\hat{s},p}
=
1,
\]
so \(v^+_{\hat{s},a,p}=1\). Therefore
\(v^+_{\hat{s},a,p}=1\) exactly when a positive precondition is violated.
The proof for \(v^-_{\hat{s},a,p}\) is identical, using
\autoref{eq:v-neg-app}.
\end{proof}

\begin{lemma}[Trusted missing pairs are excluded when applicability slack is zero]
\label{lem:negative-exclusion}
For every trusted missing pair
\((\hat{s},a)\in\hat{\mathcal N}\), if
\(\eta_{\hat{s},a}=0\), then action \(a\) is not applicable in
\(\hat{\mathbf{x}}_{\hat{s}}\).
\end{lemma}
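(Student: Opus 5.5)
The argument combines the negative-evidence constraint \autoref{eq:negative-evidence-app} with the exactness of the violation indicators from \autoref{lem:violation-indicators}. Fix a trusted missing pair \((\hat{s},a)\in\hat{\mathcal N}\) with \(\eta_{\hat{s},a}=0\). Substituting \(\eta_{\hat{s},a}=0\) into \autoref{eq:negative-evidence-app} gives
\[
\sum_{p=1}^{m}\bigl(v^{+}_{\hat{s},a,p}+v^{-}_{\hat{s},a,p}\bigr)\ge 1 .
\]
All \(v^{\pm}_{\hat{s},a,p}\) are binary, so some predicate \(p^\ast\in[m]\) must have \(v^{+}_{\hat{s},a,p^\ast}=1\) or \(v^{-}_{\hat{s},a,p^\ast}=1\).

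Next I would apply \autoref{lem:violation-indicators} to this witness and split into two cases. If \(v^{+}_{\hat{s},a,p^\ast}=1\), then \(\mathit{pre}^{+}_{a,p^\ast}=1\) and \(\hat{x}_{\hat{s},p^\ast}=0\). So \(p^\ast\in\operatorname{Pre}^{+}(a)\) but \(p^\ast\) is false in \(\hat{\mathbf{x}}_{\hat{s}}\), which violates the applicability condition. If instead \(v^{-}_{\hat{s},a,p^\ast}=1\), then \(\mathit{pre}^{-}_{a,p^\ast}=1\) and \(\hat{x}_{\hat{s},p^\ast}=1\). So \(p^\ast\in\operatorname{Pre}^{-}(a)\) but \(p^\ast\) is true, which is again a violation.

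In both cases the definition of applicability of \(\hat{\omega}_a\) stated just before \autoref{lem:well-formed} fails at \(p^\ast\). Hence \(a\) is not applicable in \(\hat{\mathbf{x}}_{\hat{s}}\).

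I do not expect a real obstacle here. The only point needing care is that the indicators must be \emph{exact}, and in particular that \(v^{\pm}=1\) forces a genuine violation. The upper-bound constraints in \autoref{eq:v-pos-app}--\autoref{eq:v-neg-app} guarantee this, and \autoref{lem:violation-indicators} packages it, so the lower-bound direction is not even needed.
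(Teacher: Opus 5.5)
Your proposal is correct and follows the paper's proof exactly. You set $\eta_{\hat{s},a}=0$ in the negative-evidence constraint to obtain a witness $p$ with $v^+_{\hat{s},a,p}=1$ or $v^-_{\hat{s},a,p}=1$, and then use the exactness of the violation indicators to conclude that some precondition of $a$ fails at $\hat{\mathbf{x}}_{\hat{s}}$. Your side remark is also correct: only the upper-bound constraints, i.e.\ the implication from $v^{\pm}=1$ to a genuine violation, are needed, which slightly sharpens what the paper invokes.
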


\begin{proof}
By \autoref{eq:negative-evidence-app}, if
\(\eta_{\hat{s},a}=0\), then
\[
\sum_{p=1}^{m}
\bigl(
v^+_{\hat{s},a,p}+v^-_{\hat{s},a,p}
\bigr)
\ge 1.
\]
Hence, for some predicate \(p\), either
\(v^+_{\hat{s},a,p}=1\) or \(v^-_{\hat{s},a,p}=1\). By
\autoref{lem:violation-indicators}, this means that either a positive
precondition of \(a\) is false in \(\hat{\mathbf{x}}_{\hat{s}}\), or a
negative precondition of \(a\) is true in
\(\hat{\mathbf{x}}_{\hat{s}}\). In either case, \(a\) is not applicable
in \(\hat{\mathbf{x}}_{\hat{s}}\).
\end{proof}

\begin{proposition}[Every feasible solution is \strips-compatible]
\label{prop:strips-compatible}

Any feasible assignment to
\autoref{eq:obs-pre-pos-app}--\autoref{eq:negative-evidence-app},
together with \autoref{eq:disjoint-app}, induces a family of
well-formed grounded propositional \strips operators
\(\{\hat{\omega}_a\}_{a\in\mathcal A}\). This statement holds
regardless of the values of the transition slack variables
\(\xi_{\hat{e},p}\) and applicability slack variables
\(\eta_{\hat{s},a}\).
\end{proposition}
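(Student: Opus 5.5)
The plan is to reduce the statement almost entirely to \autoref{lem:well-formed}, and then check that slack variables never enter the argument.

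\textbf{Step 1: extract the operators.} Fix any feasible assignment. Every mask variable \(\mathit{add}_{a,p},\mathit{del}_{a,p},\mathit{pre}^{+}_{a,p},\mathit{pre}^{-}_{a,p}\) is binary. So the sets \(\operatorname{Add}(a),\operatorname{Del}(a),\operatorname{Pre}^{+}(a),\operatorname{Pre}^{-}(a)\subseteq[m]\) are well defined for each \(a\in\mathcal A\). Hence the tuple \(\hat{\omega}_a\) is a grounded propositional operator over the vocabulary \([m]\). Its successor function \(\tau_{\hat{\omega}_a}\) and applicability test are given by the case definitions preceding \autoref{lem:well-formed}. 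These definitions depend only on the mask sets, not on any state assignment or slack variable.

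\textbf{Step 2: well-formedness.} Invoke \autoref{lem:well-formed}. Its only hypothesis is \autoref{eq:disjoint-app}, which is included among the constraints satisfied by the assignment. It yields \(\operatorname{Add}(a)\cap\operatorname{Del}(a)=\varnothing\) and \(\operatorname{Pre}^{+}(a)\cap\operatorname{Pre}^{-}(a)=\varnothing\) for every \(a\). With add and delete disjoint, the three cases in the definition of \(\tau_{\hat{\omega}_a}\) are mutually exclusive and exhaustive, so \(\tau_{\hat{\omega}_a}\) is a well-defined function on \(\{0,1\}^m\). The operator is also state-independent, because its masks do not vary with the source state. This matches the \strips form of \autoref{sec:strips_background}.

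\textbf{Step 3: independence from slack.} Observe that \(\xi_{\hat e,p}\) appears only in \autoref{eq:obs-add-app}--\autoref{eq:obs-inertia-2-app}, and \(\eta_{\hat s,a}\) only in \autoref{eq:negative-evidence-app}. Neither appears in \autoref{eq:disjoint-app}. Steps 1 and 2 therefore never use slack values, so the conclusion holds for every value of \(\xi\) and \(\eta\).

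The only real subtlety is interpretive rather than technical. With nonzero slack, an operator may fail to reproduce some observed edge or to rule out some trusted missing pair. I would state explicitly that the proposition asserts only the \emph{form} of the operators, not fidelity to \(\hat E\) or \(\hat{\mathcal N}\), which is deferred to \autoref{cor:exact-observed} and \autoref{lem:negative-exclusion}. I expect no other obstacle.
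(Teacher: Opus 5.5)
Your proposal is correct and matches the paper's proof. Well-formedness comes directly from Lemma~\ref{lem:well-formed} via the disjointness constraints, and because the slack variables $\xi$ and $\eta$ never constrain the mask variables, they cannot change the induced operators. Your extra remarks on where $\xi$ and $\eta$ occur and on the successor function being well defined only spell out what the paper's short proof leaves implicit.
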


\begin{proof}
Well-formedness follows from \autoref{lem:well-formed}. Transition slack
only relaxes agreement with observed edges, and applicability slack only
relaxes exclusion of trusted missing pairs. Neither changes
\(\operatorname{Add}(a)\), \(\operatorname{Del}(a)\),
\(\operatorname{Pre}^{+}(a)\), or \(\operatorname{Pre}^{-}(a)\), and
therefore neither changes the \strips form of the induced operators.
\end{proof}

\begin{proposition}[Lexicographic optimization preserves semantics]
\label{prop:objective-independence}
The lexicographic objective in \autoref{eq:lex-objective-main} does not
change the semantic form of the learned model; it only ranks feasible
\strips models according to slack and sparsity.
\end{proposition}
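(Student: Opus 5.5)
The plan is to separate the optimization problem into its feasible set and its objective, and to show that everything the paper calls the ``semantic form'' of the learned model depends only on the feasible set. The objective in \autoref{eq:lex-objective-main} contributes nothing to that form; it only picks one feasible point.

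First, I fix the constraint system \autoref{eq:obs-pre-pos-app}--\autoref{eq:negative-evidence-app}, together with \autoref{eq:disjoint-app} and any optional distinctness constraints \autoref{eq:diff-1-app}--\autoref{eq:state-distinct-app}. Call its feasible set $\mathcal F$. I will note that none of these constraints mentions the objective. So $\mathcal F$ is the same set whether we minimize the lexicographic objective, some other objective, or nothing at all.

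Second, I will show that every point of $\mathcal F$ induces the same kind of model, namely a family $\{\hat\omega_a\}$ of well-formed grounded propositional \strips operators together with predicate vectors $\hat{\mathbf x}_{\hat s}$. This is exactly \autoref{prop:strips-compatible}, which holds for arbitrary slack values and hence for every feasible point. I will also record which semantic guarantees hold and when. \autoref{lem:observed-preconditions} holds unconditionally. \autoref{lem:bitwise-strips}, \autoref{cor:exact-observed} and \autoref{lem:negative-exclusion} are stated pointwise, in terms of the slack values of the chosen assignment. So they transfer to whichever solution is selected, with no reference to the objective.

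Third, I treat the objective itself. Each of its four components is a nonnegative integer-valued function on $\mathcal F$:
\begin{itemize}
\item total transition slack,
\item total applicability slack,
\item number of effect bits,
\item number of precondition bits.
\end{itemize}
Lexicographic minimization therefore induces a total preorder on $\mathcal F$. Whenever $\mathcal F$ is nonempty it selects some element of $\arg\min_{\mathrm{lex}}$, which is a subset of $\mathcal F$. The reason is that $\mathcal F$ is finite, since all variables are binary, so the lexicographic minimum is attained. The returned model is thus an element of $\mathcal F$, and by the second step it is a well-formed \strips model whose guarantees are read off from its own slack values. The objective only decides which element of $\mathcal F$ is returned, i.e.\ it ranks feasible \strips models by slack and then by sparsity.

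The main obstacle is that ``semantic form'' is informal. I will make it precise as ``the induced operators are well-formed grounded propositional \strips operators, and the pointwise guarantees of the preceding lemmas apply to the returned solution according to its slack values''. With that definition, the statement reduces to the observation that optimization restricts attention to a subset of $\mathcal F$ and never enlarges or alters the constraints. A secondary point is the solver's time budget: a non-optimal but feasible incumbent is still in $\mathcal F$, so the same conclusion holds for it.
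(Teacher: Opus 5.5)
Your proposal is correct and follows essentially the same route as the paper: the \strips form is fixed by the constraint system alone, and the lexicographic objective only selects a point of that feasible set, so whatever is returned inherits Proposition~\ref{prop:strips-compatible} and the slack-dependent pointwise lemmas. Your extra details are sound refinements of the paper's one-line argument: finiteness of the binary feasible set guaranteeing that the lexicographic minimum is attained, and the fact that a time-limited feasible incumbent is also covered.
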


\begin{proof}
The \strips semantics are determined by the structural constraints in
the formulation. The objective in \autoref{eq:lex-objective-main}
selects among assignments that already satisfy those constraints.
Therefore any optimizer remains a feasible \strips model and inherits the
properties proved above.
\end{proof}

\begin{corollary}[Node-level nondeterminism is incompatible with full distinctness and zero transition slack]
\label{cor:nondet}
Assume full state distinctness, so that
\(\hat{\mathbf{x}}_{\hat{s}_1}\neq\hat{\mathbf{x}}_{\hat{s}_2}\)
is required whenever \(\hat{s}_1\neq\hat{s}_2\). Also assume zero
transition slack. If the learned graph contains
\[
(\hat{s},a,\hat{s}_1)\in\hat E
\qquad\text{and}\qquad
(\hat{s},a,\hat{s}_2)\in\hat E
\]
with \(\hat{s}_1\neq\hat{s}_2\), then the formulation is infeasible.
\end{corollary}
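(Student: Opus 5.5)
We argue by contradiction: assume a feasible assignment exists under full distinctness and zero transition slack, and derive that it forces $\hat{\mathbf{x}}_{\hat{s}_1}=\hat{\mathbf{x}}_{\hat{s}_2}$, which contradicts the distinctness constraint for the pair $\{\hat{s}_1,\hat{s}_2\}$.

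\textbf{Step 1: both edges are exact images of the same state.} Zero transition slack means $\xi_{\hat e,p}=0$ for every observed edge $\hat e$ and every $p\in[m]$. Applying \autoref{cor:exact-observed} to the edge $(\hat{s},a,\hat{s}_1)$ gives
\[
\hat{\mathbf{x}}_{\hat{s}_1}=\tau_{\hat{\omega}_a}(\hat{\mathbf{x}}_{\hat{s}}).
\]
Applying it to $(\hat{s},a,\hat{s}_2)$ gives
\[
\hat{\mathbf{x}}_{\hat{s}_2}=\tau_{\hat{\omega}_a}(\hat{\mathbf{x}}_{\hat{s}}).
\]

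\textbf{Step 2: the successor is unique.} The key observation is that $\tau_{\hat{\omega}_a}$ is a function. It depends only on $\hat{\mathbf{x}}_{\hat{s}}$ and the masks $\operatorname{Add}(a)$ and $\operatorname{Del}(a)$, and each action identifier $a$ has exactly one set of masks in the assignment. By \autoref{lem:well-formed} these masks are disjoint, so the componentwise case definition of $\tau_{\hat{\omega}_a}$ is unambiguous. Hence $\hat{\mathbf{x}}_{\hat{s}_1}=\hat{\mathbf{x}}_{\hat{s}_2}$.

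\textbf{Step 3: contradiction with distinctness.} Full distinctness places $\{\hat{s}_1,\hat{s}_2\}$ in $\mathcal P_{\mathrm{dist}}$, since $\hat{s}_1\neq\hat{s}_2$. Constraints \autoref{eq:diff-1-app}--\autoref{eq:state-distinct-app} then require $\hat{\mathbf{x}}_{\hat{s}_1}\neq\hat{\mathbf{x}}_{\hat{s}_2}$. To see this, note that when $\hat{x}_{\hat{s}_1,p}=\hat{x}_{\hat{s}_2,p}$, constraints \autoref{eq:diff-3-app} and \autoref{eq:diff-4-app} force $d_{\hat{s}_1,\hat{s}_2,p}=0$ whether the common value is $0$ or $1$. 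If the two vectors were equal, every $d_{\hat{s}_1,\hat{s}_2,p}$ would vanish, violating \autoref{eq:state-distinct-app}. This contradicts Step 2, so no feasible assignment exists.

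The only step needing care is Step 3, checking that the linearized difference variables exactly encode vector inequality. The case check on \autoref{eq:diff-3-app}--\autoref{eq:diff-4-app} above handles it, so there is no real obstacle. Applicability of $a$ in $\hat{\mathbf{x}}_{\hat{s}}$ (\autoref{lem:observed-preconditions}) is not needed, because \autoref{cor:exact-observed} already pins down the successor directly.
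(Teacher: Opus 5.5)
Your proposal is correct and takes the same route as the paper: apply \autoref{cor:exact-observed} to both edges to get $\hat{\mathbf{x}}_{\hat{s}_1}=\tau_{\hat{\omega}_a}(\hat{\mathbf{x}}_{\hat{s}})=\hat{\mathbf{x}}_{\hat{s}_2}$, which contradicts full distinctness. Your extra case check, that \autoref{eq:diff-3-app}--\autoref{eq:diff-4-app} force $d_{\hat{s}_1,\hat{s}_2,p}=0$ whenever the two bits agree, verifies a detail the paper takes for granted, and the check is correct.
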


\begin{proof}
By \autoref{cor:exact-observed},
\[
\hat{\mathbf{x}}_{\hat{s}_1}
=
\tau_{\hat{\omega}_a}(\hat{\mathbf{x}}_{\hat{s}})
=
\hat{\mathbf{x}}_{\hat{s}_2}.
\]
This contradicts the full distinctness requirement
\(\hat{\mathbf{x}}_{\hat{s}_1}\neq\hat{\mathbf{x}}_{\hat{s}_2}\).
\end{proof}

\begin{theorem}[Soundness and completeness of the exact encoding]
\label{thm:sound-complete}

Fix the predicate dimension \(m\), the trusted missing-pair set
\(\hat{\mathcal N}\), and the selected distinctness constraints. Consider
the exact regime in which
\[
\sum_{\hat e\in\hat E}\sum_{p=1}^{m}\xi_{\hat e,p}=0
\qquad\text{and}\qquad
\sum_{(\hat{s},a)\in\hat{\mathcal N}}\eta_{\hat{s},a}=0.
\]
Equivalently, all transition slack variables and all applicability slack
variables are fixed to zero. Then the formulation
consisting of
\autoref{eq:obs-pre-pos-app}--\autoref{eq:negative-evidence-app},
together with \autoref{eq:disjoint-app} and any selected distinctness
constraints from \autoref{eq:diff-1-app}--\autoref{eq:state-distinct-app}, is feasible if and
only if there exist:

\begin{enumerate}
    \item predicate vectors
    \(\hat{\mathbf{x}}_{\hat{s}}\in\{0,1\}^{m}\) for all
    \(\hat{s}\in\hat{\mathcal S}\), and
    \item grounded propositional \strips operators
    \(\{\hat{\omega}_a\}_{a\in\mathcal A}\) over these \(m\) predicates,
\end{enumerate}
such that:
\begin{enumerate}
    \item every observed edge
    \((\hat{s},a,\hat{s}')\in\hat E\) satisfies the preconditions of
    \(\hat{\omega}_a\) at \(\hat{\mathbf{x}}_{\hat{s}}\);
    \item every observed edge
    \((\hat{s},a,\hat{s}')\in\hat E\) satisfies
    \[
    \tau_{\hat{\omega}_a}(\hat{\mathbf{x}}_{\hat{s}})
    =
    \hat{\mathbf{x}}_{\hat{s}'};
    \]
    \item every trusted missing pair
    \((\hat{s},a)\in\hat{\mathcal N}\) is inapplicable in
    \(\hat{\mathbf{x}}_{\hat{s}}\);
    \item every selected distinctness constraint is satisfied.
\end{enumerate}
\end{theorem}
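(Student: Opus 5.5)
The proof splits into the two directions of the equivalence. In each direction we pass between a feasible assignment of the binary program, with all $\xi_{\hat e,p}$ and $\eta_{\hat s,a}$ fixed to zero, and a tuple of predicate vectors and operators satisfying conditions 1--4. Throughout we identify an operator $\hat{\omega}_a$ with its four masks via $\operatorname{Pre}^{\pm}(a)$, $\operatorname{Add}(a)$, $\operatorname{Del}(a)$, exactly as in the definition preceding \autoref{lem:well-formed}.

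\emph{Soundness (feasible $\Rightarrow$ model).} Take a feasible zero-slack assignment and read off $\hat{\mathbf{x}}_{\hat s}$ and the induced operators $\hat\omega_a$. These are well-formed grounded \strips operators by \autoref{lem:well-formed}, or equivalently by \autoref{prop:strips-compatible}. Each of the four conditions then follows from an earlier result:
\begin{itemize}
\item Condition 1 is \autoref{lem:observed-preconditions}.
\item Condition 2 is \autoref{cor:exact-observed}, since $\sum\xi=0$ forces every $\xi_{\hat e,p}=0$ (the slacks are nonnegative binaries).
\item Condition 3 is \autoref{lem:negative-exclusion}, since every $\eta_{\hat s,a}=0$.
\item Condition 4 holds because \autoref{eq:diff-1-app}--\autoref{eq:diff-2-app} force $d_{\hat s,\hat r,p}\ge|\hat x_{\hat s,p}-\hat x_{\hat r,p}|$, and \autoref{eq:diff-3-app}--\autoref{eq:diff-4-app} force $d_{\hat s,\hat r,p}=0$ when the two bits agree. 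So $d$ is exactly the XOR of the two bits, and \autoref{eq:state-distinct-app} gives $\hat{\mathbf{x}}_{\hat s}\neq\hat{\mathbf{x}}_{\hat r}$.
\end{itemize}

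\emph{Completeness (model $\Rightarrow$ feasible).} Given vectors and well-formed operators satisfying 1--4, define the decision variables directly: set $\hat x_{\hat s,p}$ from the vectors, set the mask variables as the indicator functions of $\operatorname{Pre}^{\pm}(a)$, $\operatorname{Add}(a)$, $\operatorname{Del}(a)$, and set every $\xi$ and $\eta$ to $0$. Set the auxiliaries to the intended Boolean values:
\[
v^+_{\hat s,a,p}=\mathit{pre}^+_{a,p}(1-\hat x_{\hat s,p}),\qquad
v^-_{\hat s,a,p}=\mathit{pre}^-_{a,p}\hat x_{\hat s,p},\qquad
d_{\hat s,\hat r,p}=|\hat x_{\hat s,p}-\hat x_{\hat r,p}|.
\]
We then check each constraint family.
\begin{itemize}
\item \autoref{eq:disjoint-app} holds by well-formedness of the given operators.
\item \autoref{eq:obs-pre-pos-app}--\autoref{eq:obs-pre-neg-app} hold by condition 1.
\item \autoref{eq:obs-add-app}--\autoref{eq:obs-inertia-2-app} hold with $\xi=0$. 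Condition 2 gives $\hat x_{\hat s',p}=\tau_{\hat\omega_a}(\hat{\mathbf x}_{\hat s})_p$, and a case split on whether $p\in\operatorname{Add}(a)$, $p\in\operatorname{Del}(a)$, or neither verifies each inequality. This is the converse of the computation in \autoref{lem:bitwise-strips}; in the add and delete cases the inertia inequalities are trivially satisfied because their right-hand side is $1$.
\item \autoref{eq:v-pos-app}--\autoref{eq:v-neg-app} hold by checking the four bit combinations, which is the converse of \autoref{lem:violation-indicators}.
\item \autoref{eq:negative-evidence-app} holds with $\eta=0$. By condition 3, some precondition of $a$ is violated at $\hat{\mathbf x}_{\hat s}$, so some $v^{\pm}_{\hat s,a,p}=1$.
\item The distinctness constraints hold for the XOR choice of $d$, with the sum at least $1$ by condition 4.
\end{itemize}

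No step is deep; the most delicate part is the completeness direction. There we must verify that every linear constraint, including the linearizations, admits a consistent witness for the auxiliary variables $v^\pm$ and $d$. In particular, the inertia constraints must not accidentally over-constrain bits that are added or deleted, and the "$\ge$" linearization inequalities must be satisfiable by the chosen Boolean values. We will handle this with an explicit bitwise case analysis for each constraint family.
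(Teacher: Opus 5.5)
Your proposal is correct and follows essentially the same route as the paper. Soundness comes from \autoref{lem:well-formed}/\autoref{prop:strips-compatible}, \autoref{lem:observed-preconditions}, \autoref{cor:exact-observed}, and \autoref{lem:negative-exclusion}, and completeness from reading the decision variables off the given model with all slack set to zero and indicator witnesses for $v^{\pm}$. Your explicit XOR witness for the $d$ variables, together with the check that \autoref{eq:diff-3-app}--\autoref{eq:diff-4-app} force $d=0$ on agreeing bits, spells out the distinctness step that the paper only asserts holds ``by construction'' and ``by assumption.''
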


\begin{proof}
For the forward direction, suppose the formulation is feasible in the
exact regime. By \autoref{prop:strips-compatible}, the induced operators
are well-formed grounded propositional \strips operators. By
\autoref{lem:observed-preconditions}, every observed edge satisfies the
learned preconditions. By \autoref{cor:exact-observed}, every observed
edge is reproduced exactly in predicate space. By
\autoref{lem:negative-exclusion}, every trusted missing pair is
inapplicable. The selected distinctness constraints hold by construction
whenever they are included.

For the reverse direction, suppose such predicate vectors and grounded
propositional \strips operators exist. Assign
\(\hat{x}_{\hat{s},p}\) according to the given predicate vectors. Set
\(\mathit{add}_{a,p}\), \(\mathit{del}_{a,p}\),
\(\mathit{pre}^{+}_{a,p}\), and \(\mathit{pre}^{-}_{a,p}\) according to
\(\operatorname{Add}(a)\), \(\operatorname{Del}(a)\),
\(\operatorname{Pre}^{+}(a)\), and \(\operatorname{Pre}^{-}(a)\).
Because the operators are well-formed, \autoref{eq:disjoint-app} holds.

For each observed edge
\(\hat e=(\hat{s},a,\hat{s}')\in\hat E\), the assumed precondition
satisfaction gives
\autoref{eq:obs-pre-pos-app}--\autoref{eq:obs-pre-neg-app}. The assumed
successor equality
\[
\tau_{\hat{\omega}_a}(\hat{\mathbf{x}}_{\hat{s}})
=
\hat{\mathbf{x}}_{\hat{s}'}
\]
gives \autoref{eq:obs-add-app}--\autoref{eq:obs-inertia-2-app} with
\(\xi_{\hat{e},p}=0\) for all \(p\in[m]\).

For each trusted missing pair
\((\hat{s},a)\in\hat{\mathcal N}\), set
\(v^+_{\hat{s},a,p}=1\) exactly when a positive precondition is violated
at predicate \(p\), and set \(v^-_{\hat{s},a,p}=1\) exactly when a
negative precondition is violated. Then
\autoref{eq:v-pos-app}--\autoref{eq:v-neg-app} hold. Since
\((\hat{s},a)\) is inapplicable, at least one learned precondition is
violated, so \autoref{eq:negative-evidence-app} holds with
\(\eta_{\hat{s},a}=0\). Finally, the selected distinctness constraints
hold by assumption. Hence the CP-SAT formulation is feasible.
\end{proof}

\paragraph{Interpretation.}
\autoref{prop:strips-compatible} is the unconditional structural
guarantee: every feasible assignment induces grounded propositional
\strips operators. Slack affects agreement with the learned graph, not
the form of the operators. When \(\sum\xi=0\),
\autoref{cor:exact-observed} shows that every observed edge in
\(\hat E\) is reproduced exactly in predicate space. When
\(\sum\eta=0\) as well, \autoref{lem:negative-exclusion} shows that
every trusted missing pair in \(\hat{\mathcal N}\) is ruled out by at
least one learned precondition. The exact zero-slack correspondence is
summarized by \autoref{thm:sound-complete}.

\section{Extended Experiments and Results}

\paragraph{RQ4: Is negative evidence necessary for learning useful preconditions?}
The main operator-learning formulation uses trusted missing state-action
pairs \(\hat{\mathcal N}\) as negative evidence. We ablate this design
choice on DinnerTable and DinnerTable Real to test whether observed
transitions alone are sufficient to learn useful preconditions.

We compare three conditions. The first is the standard STRIPS-WM setting
used in the main experiments, which includes negative evidence. The
second keeps the same solver setting but removes the negative-evidence
constraints from \(\hat{\mathcal N}\). The third also removes negative
evidence, but adds a minimum-unique-state constraint
\[
|\hat X| \ge |S^\star|,
\]
where \(S^\star\) is the ground-truth task-state set used only for
evaluation. This last condition tests whether simply forcing the solver
to keep enough distinct predicate states can replace negative evidence.

\begin{figure}[h]
    \centering
    \includegraphics[width=\linewidth]{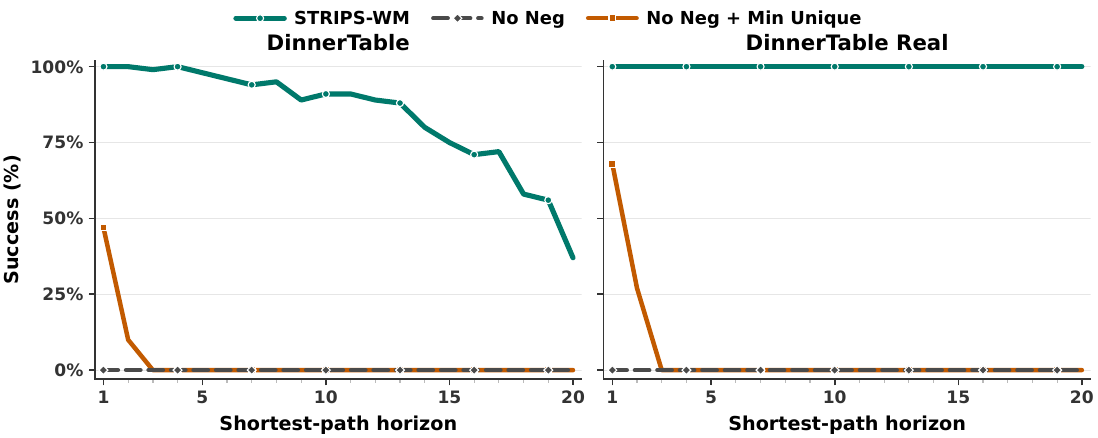}
    \vspace{-1em}
    \caption{\footnotesize Negative-evidence ablation on DinnerTable and DinnerTable Real. Removing negative evidence makes planning fail, even when the solver is guided with the true minimum number of task states.}
    \label{fig:negative-evidence-ablation}
    \vspace{-1em}
\end{figure}

\autoref{fig:negative-evidence-ablation} shows that negative evidence is
essential for planning. Without negative evidence, both domains collapse
to two distinct predicate vectors under the standard relaxed-distinctness
setting, and planning success drops to \(0\%\). Adding the
minimum-unique-state constraint prevents this two-state collapse, but it
does not recover useful preconditions: planning remains poor because the
learned model still permits thousands of unobserved state-action pairs.

\begin{table}[h]
\centering
\small
\resizebox{\linewidth}{!}{
\begin{tabular}{llrrrrr}
\toprule
Domain & Condition
& \(|S^\star|\)
& \(\sum \xi\)
& \(\sum \eta\)
& False positives
& \(|\hat X|\) \\
\midrule
DinnerTable
& STRIPS-WM
& 101 & 0 & 9 & 9 & 101 \\
DinnerTable
& No neg.
& 101 & 0 & 0 & 6864 & 2 \\
DinnerTable
& No neg. + min-unique
& 101 & 0 & 0 & 6864 & 101 \\
\midrule
DinnerTable Real
& STRIPS-WM
& 71 & 0 & 13 & 13 & 71 \\
DinnerTable Real
& No neg.
& 71 & 0 & 0 & 6893 & 2 \\
DinnerTable Real
& No neg. + min-unique
& 71 & 32 & 0 & 6893 & 83 \\
\bottomrule
\end{tabular}
}
\caption{Negative-evidence ablation for operator learning. False positives count applicable state-action pairs that are not observed in the learned task graph. The min-unique condition adds the constraint \(|\hat X|\ge |S^\star|\), using the ground-truth task-state count only as a diagnostic guide.}
\label{tab:negative-evidence-ablation}
\end{table}
The operator-learning statistics in
\autoref{tab:negative-evidence-ablation} show the same failure mode.
With negative evidence, STRIPS-WM learns compact preconditions:
DinnerTable has only 9 false-positive applicable state-action pairs, and
DinnerTable Real has 13. Removing
negative evidence increases this number to 6864 and 6893, respectively.
The min-unique constraint does not fix the issue. It can force a larger
predicate-state set, but it does not tell the solver which actions should
be inapplicable in which states.

The ablation highlights a failure mode that is hidden if we only look at
whether observed transitions can be explained. Without negative evidence,
the solver can fit the observed executions while leaving many unobserved
actions applicable. The min-unique constraint changes the size of the
predicate-state set, but it does not resolve this applicability ambiguity:
DinnerTable still has 6864 false-positive state-action pairs, and
DinnerTable Real has 6893. The trusted missing pairs
\(\hat{\mathcal N}\) therefore provide the signal that rules out these
spurious applications, which is what makes the learned operators useful
for long-horizon planning.
\end{document}